\renewcommand*\env@matrix[1][\arraystretch]{
	\edef\arraystretch{#1}%
	\hskip -\arraycolsep
	\let\@ifnextchar\new@ifnextchar
	\array{*\c@MaxMatrixCols c}}
\newcounter{parentnumber}
\newtheorem{theorem}{Theorem}[section]
\newtheorem{lemma}[theorem]{Lemma}
\newtheorem{remark}[theorem]{Remark}
\newtheorem{assumption}{Assumption}
\newcommand{\col}{{\rm col\;}}
\newcommand{\until}[1]{\{1,\cdots,#1\}}
\newcommand{\longthmtitle}[1]{\mbox{} \emph{(#1):}}
\newcommand\oprocendsymbol{\hbox{$\square$}}
\newcommand\oprocend{\relax\ifmmode\else\unskip\hfill\fi\oprocendsymbol}
\def\qed{ \rule{.1in}{.1in}}
\title{D3G: Learning Multi-robot Coordination from Demonstrations}
\author{%
Yizhi Zhou, Wanxin~Jin and Xuan~Wang 
 \thanks{
	Work supported by Army Research Office (W911NF-22-2-0242) and NSF (2332210). George Mason University.
 X. Wang and Y. Zhou are with the
		Department of Electrical and Computer Engineering, George Mason
		University. Wanxin Jin is with the School for Engineering of Matter, Transport, and Energy, Arizona State University. 
        Point of contact: {\tt xwang64@gmu.edu}.
  }
}
\begin{document}
	
\maketitle

\begin{abstract}


This paper develops a new Distributed approach for solving the inverse problem of a Differentiable Dynamic Game (D3G), which enables robots to learn multi-robot coordination from given demonstrations. 
We formulate multi-robot coordination as the Nash equilibrium of a parameterized dynamic game, where the behavior of each robot is dictated by an objective function that also depends on the behavior of its neighboring robots. The coordination thus can be adapted by tuning the parameters of the objective and the local dynamics of each robot. The proposed algorithm enables each robot to automatically tune such parameters in a distributed and coordinated fashion --- only using the data of its neighbors without global information. Its key novelty is the development of a distributed solver for a diff-KKT condition that can enhance scalability and reduce the computational load for gradient computation. We test the proposed algorithm in simulation with heterogeneous robots given different task configurations. The results demonstrate its effectiveness and generalizability for learning multi-robot coordination from demonstrations.

\end{abstract}

\section{Introduction}
The control and coordination of large-scale multi-robot systems have long been viewed as a challenging problem, due to the need for robots to make sequential and coordinated decisions~\cite{RY-AM-TEW:19}. 
Dynamic game theory provides an accessible framework for modeling the interaction among multiple robots, whose behaviors are dictated by their local observations and coupled objective functions~\cite{ZK-YZ-BT:21,wang2022consensus}. To ensure that robot interactions lead to meaningful coordinated behavior, objective functions must be carefully designed, which however is technically non-trivial, and mostly relies on heuristic trial-and-error. In contrast, specifying/demonstrating desired robot behaviors is much more intuitive. This has motivated the research of learning objective functions from demonstrations also known as inverse dynamic game (IDG)~\cite{LX-ASC-BPA:19}.
In the counterpart problem for a single robot case, many tools and methods are available, ranging from imitation learning~\cite{hussein2017imitation}, learning from demonstrations~\cite{jin2022learning}, to, most recently, differentiable optimal control~\cite{jin2021safe,jin2020pontryagin}. However, scalable solutions to address the aforementioned challenges in multi-robot systems are quite limited, mainly due to the dimensionality of the problem that quickly grows with the number of robots. In this paper, we propose a new Distributed Differentiable Dynamic Game (D3G) framework for solving IDG, where each robot automatically learns its objective function in a \textit{distributed} and coordinated fashion --- only using the data of its neighbors without global information. At the core of our algorithm is a distributed solver that leverages the \textit{differentiability} of the KKT condition (diff-KKT) to enhance scalability and reduce the computational load for gradient computation. A conceptual diagram of D3G is in Fig.~\ref{ALG_demo}.

\begin{figure}[t]
	\vspace{-1ex}
	\centering
	\includegraphics[width=0.48\textwidth]{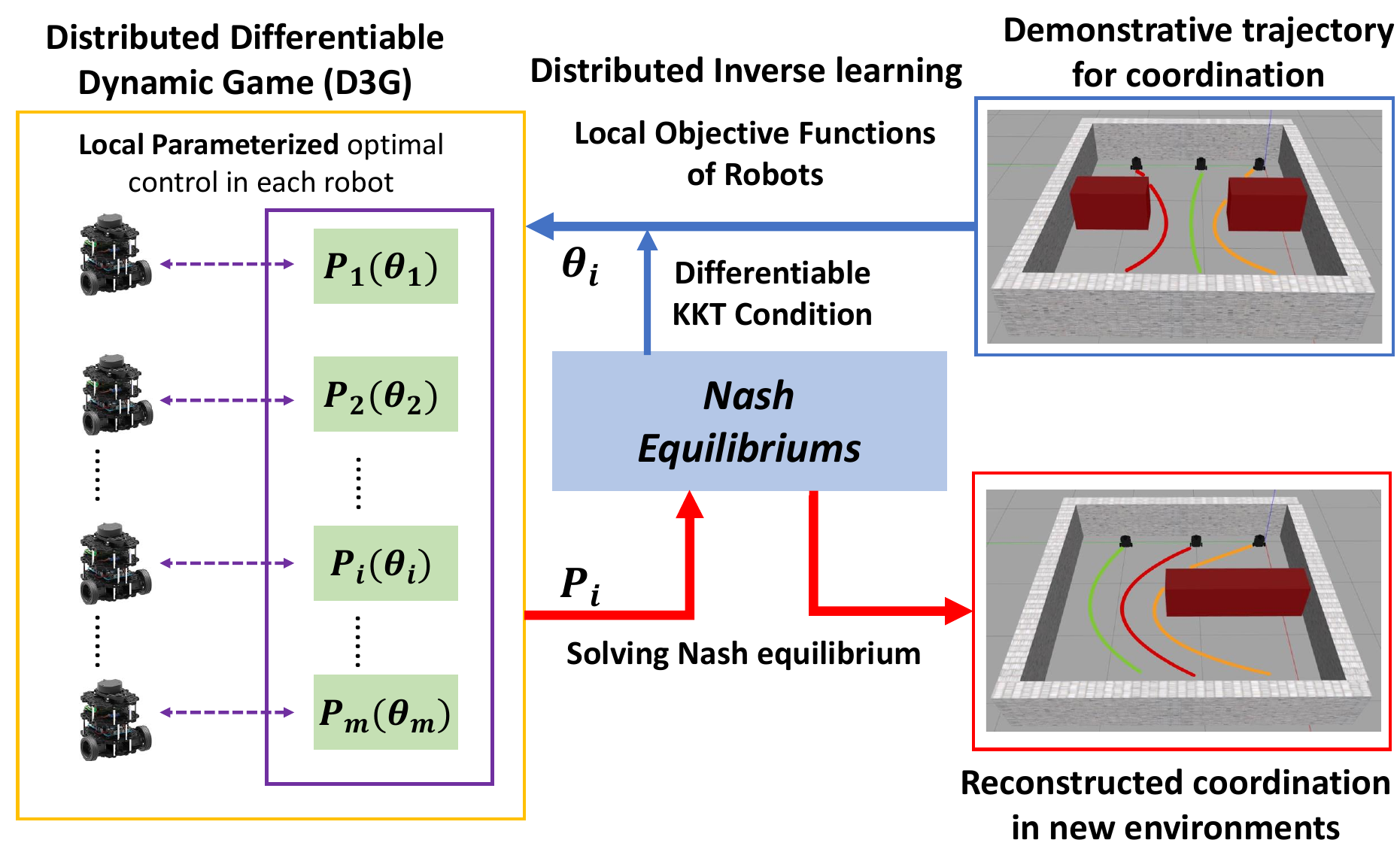}
	\caption{\small Each robot possesses a local optimal control $\mathbf{P}_i$, which together constitutes a dynamic game. The Nash equilibrium of the game reconstructs robot coordination. Problem of interest: Distributed inverse learning (blue) of parameterized objective functions from demonstration for robot coordination. The learned objective is generalizable (red) to new environments.\\
 }
	\label{ALG_demo}
\end{figure}



\subsubsection*{Related Work}
Learning from demonstrations can be formulated as a problem of inverse optimal control (IOC), also known as \textit{Inverse reinforcement learning}, seeking to learn an objective function of a decision-making agent from expert demonstrations~\cite{ab2020inverse}.
One type of method for solving IOC directly minimizes the residual of the optimality (KKT) conditions by assuming that the demonstration is optimal and fulfills these conditions~\cite{EP-VNA-TM:17}.
Another common approach is built upon a bi-level structure, containing a forward loop and an inverse loop. The forward loop solves a standard optimal control problem with the current objective estimate. Available methods for this include dynamic programming~\cite{BD:12}, trajectory optimizations~\cite{RM-PM:17}, and reinforcement learning~\cite{EJ:21}. The inverse loop updates the objective estimate such that a trajectory of the forward loop matches the demonstrations by minimizing certain losses. 
Different methods for IOC vary in how to accommodate the forward and inverse loops ~\cite{jin2020pontryagin,RND-BJA-ZMA:06,ZBD-ALB-JAD-AK:08}
, and also how to define loss functions, such as least square~\cite{jin2020pontryagin}, maximum margin~\cite{RND-BJA-ZMA:06}, maximum entropy~\cite{ZBD-ALB-JAD-AK:08}.

Dynamic game generalizes optimal control to a multi-robot setup, where each robot's objective functions depend on its own action and the actions of other robots over time. Addressing such sequential decision-making processes often involves treating agents' entire state and action trajectories as variables~\cite{YM-HG:17}. The set of robots' planned trajectories, when no one can improve its objective function by changing its behaviors, constitutes the solution to the game called open-loop Nash equilibrium~\cite{FD-TJ:91}.
Common approaches to obtaining a Nash equilibrium include: designing an algorithm whose dynamics asymptotically converge to the desired Nash equilibrium~\cite{tatarenko2020geometric,BA-SW:04}. To satisfy dynamics constraints,~\cite{SF-PL:16} 
introduced a projection operator that restricts the gradient flow to a feasible set, ensuring compliance with an agent's local constraints. 
An alternative approach is to compute Nash equilibrium directly from its holding conditions~\cite{BA:10}, which can be done by generalizing the Pontryagin's Maximum Principle/KKT condition~\cite{LSM:06} to a game theoretical setup.

Analogous to IOC, dynamic games also have their \textit{inverse problem}, i.e., given robots collective trajectories satisfying a Nash equilibrium, how to inversely learn the objective functions the robots aim to optimize~\cite{MTL-FJJ-PT:17,LX-ASC-BPA:19}. 
Existing works for solving inverse games have three main categories. 
The first category aims to solve the inverse game by applying derivative-free filter-based approaches built upon Bayesian inference~\cite{LC-SS-MZ:21, PT:21}, which however has high sample complexity and requires exact observations of state.
The second category solves the inverse game by equilibrium-constrained maximum-likelihood estimation (MLE), which uses the optimality conditions of the open-loop Nash equilibrium, to formulate a constrained optimization problem~\cite{PL-DR-CS:2021, MTL-FJJ-PT:17}. This type of method can explicitly handle noisy data and partial observations.
The third category follows the minimization of residual methods~\cite{CR-JF:17,CK-XL:22}, which seek to minimize the residual of the first-order necessary conditions of an open-loop Nash equilibrium. These works are further extended in~\cite{AC-LA:20,FD-RE-LD:2020,LX-LA-J:2023} to involve state and input constraints. 

The approach proposed in this paper is similar to the ones in the last category~\cite{CK-XL:22,LX-LA-J:2023}. However, we note that existing methods for solving inverse dynamic games rely on a centralized process, where the forward loop and inverse loop are solved using the global information of all robots. Consequently,  the computation and communication complexity grows exponentially with the number of robots and planning horizons. While there exist distributed approaches for solving the forward problem~\cite{YM-HG:17, tatarenko2020geometric,ZY-HB-MZ-RW:21}, the scalability challenge remains for addressing the inverse problem. 
This motivates us to develop a distributed inverse dynamics game framework for \textit{scaling} the complexity of global computation and communication into a local and coordinated approach.



\subsubsection*{Statement of Contributions}

We study the problem of learning multi-robot coordination from demonstration by
formulating it as a differentiable dynamic game. Each robot in the game satisfies its dynamics and optimizes a coupling objective function. Both the dynamics and objective of each robot are unknown and learnable. We propose a D3G framework to inversely solve the dynamic game by minimizing the
mismatch between the predicted multi-robot trajectories of the dynamic game and the given demonstrations. The learning update of D3G is based on local gradient descent. This allows a fully distributed algorithm design, where each robot uses the diff-KKT condition to compute its parameter update, by only using the data of its neighbors without global information. The effectiveness and scalability of D3G are verified using two types of robots given four different task configurations.






\section{Preliminaries and Problem Formulation}

\subsection{Parametric  Dynamic Game for Multi-robot Coordination}
Consider a system of $m$ robots. Suppose each robot solves its own optimal control problem \ref{eq_OCP} parameterized by a vector $\theta_i\in\mathbb{R}^{r_i}$ as follows: 
\begin{align}
		\nonumber
			\min_{\mathbf{u}_i} & \quad \mathcal{J}_i(\theta_i)=\sum_{t=0}^{T-1} c_i^t(x_i^t, u_i^t, {x_{\mathcal{N}_i}^t}, \theta_i)+h_i(x_i^T, x_{\mathcal{N}_i}^T, \theta_i),
			\\  
			\text{s.t.} & \quad x_i^{t+1}=f_i(x_i^t, u_i^t, \theta_i) \quad \text{given} ~~x_i^0. \nonumber
\label{eq_OCP}
\tag{$\displaystyle \mathbf{P}_i(\theta_i)$}
\end{align}
Here, for robot $i$, $x_i^t \in\mathbb{R}^{n_i}$ and $u_i^t \in\mathbb{R}^{m_i}$ are the robot's state and control input at each time step $ t\in\{0,1,2,\cdots,T\}$;  $\mathbf{x}_{i}=\{{x}_{i}^{0},\cdots, {x}_{i}^{T}\}$ and $\mathbf{u}_{i}=\{{u}_{i}^{0},\cdots, {u}_{i}^{T-1}\}$; $f_i(\cdot)\in\mathbb{R}^{n_i}$ is the robot dynamics; $\mathcal{J}_i(\cdot)\in\mathbb{R}$ is the local control objective function with $c_i^t(\cdot)\in\mathbb{R}$ and $h_i(\cdot)\in\mathbb{R}$ denoting the running and final costs, respectively. 
To characterize the fact that connected robots have coordinated behaviors, their objective functions are set to be coupled, i.e., $\mathcal{J}_i(\cdot)$ depends not only on the state/input of robot $i$, but also on that of its neighbors, denoted by $x_{\mathcal{N}_i}^t=\{x_j^t~|~j\in\mathcal{N}_i\}$, with $\mathcal{N}_i$ being the neighbor set of robot $i$. 
The neighborhoods of robots define the communication topology $\mathbb{G}$ across the whole system, whose vertices are associated with the robots. We assume $\mathbb{G}$  is undirected.
Further define $\bm{\xi}_i = \{\mathbf{x}_{i}, \mathbf{u}_{i}\}$, which represents the full trajectory of robot $i$ for all time steps.



Since each robot only makes local observations, the collection of optimal control problems \ref{eq_OCP} across all robots forms a \textit{general-sum} dynamic game $\mathbf{P}(\Theta)$ parameterized by $\Theta=\col\{\theta_1,\cdots,\theta_m\}\in\mathbb{R}^{\sum_i^m r_i}$. 
Given the objective functions $\mathcal{J}_i(\theta_i)$ to be mutually coupled, the `forward' (v.s. inverse) problem of the game $\mathbf{P}(\Theta)$ is to obtain a set of state-input-trajectories $\bm{\xi}^{\star}_{i}(\Theta)=\{{\mathbf{x}^{\star}_{i}}(\Theta),{\mathbf{u}^{\star}_{i}}(\Theta)\}$ for all $i\in\until{m}$, called \textit{open-loop Nash Equilibrium (N.E.)}\footnote{In this paper, we refer to N.E. as an open-loop Nash equilibrium, in contrast to the feedback Nash equilibrium~\cite[Chapter 3]{BT-OGJ:98}.}, satisfying: 
\begin{equation}
\begin{aligned}
	&\mathcal{J}_i(\bm{\xi}_{i}^{\star}(\Theta),~\bm{\xi}_{\mathcal{N}_i}^{\star}(\Theta),~\theta_i)\le\mathcal{J}_i(\bm{\xi}_{i},~\bm{\xi}_{\mathcal{N}_i}^{\star}(\Theta),~\theta_i)\\
	&\qquad\text{s.t.}\qquad \bm{\xi}_i\in\Xi_{i} (\theta_i).
\end{aligned}
\tag{N.E.}
\label{eq_NashP}
\end{equation}
where $\Xi_{i}(\theta_i)$ is the set of all feasible trajectories of robot $i$ satisfying its initial condition and system dynamics.  $\Xi_{i}(\theta_i)$ is a function of $\theta_i$ because the dynamics $f_i(\cdot)$ is parameterized by $\theta_i$. 
We use the \eqref{eq_NashP} of $\mathbf{P}(\Theta)$ to characterize distributed multi-robot coordination, where each robot determines its trajectory $\bm{\xi}^{\star}_{i}(\Theta)$ based on the local information of its neighboring robots. 
$\bm{\xi}^{\star}_{i}(\Theta)$ is a function of tunable $\Theta$.



\subsection{Problem Formulation}
While lots of effort has been given to solve the `forward' problem of $\mathbf{P}(\Theta)$, i.e., calculating its \eqref{eq_NashP} given robots' objective functions, this work focuses on the `inverse' problem: \textit{Which objective functions (the parameters for $\Theta$) can reconstruct desired multi-robot coordination strategies that are aligned with given demonstrations.} 

To this end, we first introduce the following assumption.
\begin{assumption}\label{Ass_func}
Both $\mathcal{J}_i(\cdot)$ and $f_i(\cdot)$ are twice differentiable. Given other variables being fixed, the cost function $\mathcal{J}_i(\cdot)$ is strictly convex on $x_i$ and $u_i$. The feasible trajectory set $\Xi_{i}(\theta_i)$ is convex and bounded.  
\end{assumption}
\smallskip

Assumption \ref{Ass_func} ensures the existence and uniqueness of a pure \eqref{eq_NashP} for $\mathbf{P}(\Theta)$~\cite[Theorem 4.3]{BT-OGJ:98}. It imposes some mild conditions on $f_i(\cdot)$ and $\mathcal{J}_i(\cdot)$, which are common in the existing literature for game-theoretic studies of multi-robot systems
~\cite{tatarenko2020geometric, YM-HG:17, ZY-HB-MZ-RW:21}
. These conditions generally hold for physical models of simple mobile robots and regular cost functions such as distance to the goal. In the case that $\Xi_{i}(\theta_i)$ is unbounded, the existence and uniqueness can still be guaranteed~\cite[Corollary 4.2]{BT-OGJ:98} if we further assume $\mathcal{J}_i(\cdot)\to\infty$ as $|x_i|$ or $|u_i|\to\infty$. This holds for most cost functions.

\smallskip
\noindent\textbf{Problem of interest}: 
Given the demonstrations of robot trajectories $\{\bm{\xi}_1^d,\bm{\xi}_2^d,\cdots, \bm{\xi}_m^d\}$, $d\in\until{D}$, that are associated with the \eqref{eq_NashP} of a game $\mathbf{P}(\Theta)$, with unknown $\Theta$. 
Suppose each robot $i$ locally knows $\mathbf{P}_i(\cdot)$ and $\bm{\xi}_i^d$. We aim to develop a fully distributed algorithm over $\mathbb{G}$ such that all robots jointly learn the parameter $\Theta^{\star}=\col\{\theta_1^{\star},\cdots,\theta_m^{\star}\}$ by minimizing the following loss function
	\begin{equation}
	\begin{aligned}
		\min_{\Theta=\col\{\theta_1,\cdots,\theta_m\}}~\sum_{i=1}^{m} {\mathcal{L}_i({\bm{\xi}^{\star}_i(\Theta)},\bm{\xi}_i^d)}.
	\end{aligned}
	\label{eq_problem}
	\end{equation}
    The loss function in each robot is defined as 
    \begin{equation}
\begin{aligned} 
	{\mathcal{L}_i({\bm{\xi}^{\star}_i(\Theta)},\bm{\xi}_i^d)}=\sum_{d=1}^{D}\|{\bm{\xi}^{\star}_i(\Theta)}-\bm{\xi}_i^d\|^2_2
\end{aligned}
\label{eq_lost}
\end{equation}

By minimizing (\ref{eq_problem}),  we learn a proper $\Theta^{\star}$, i.e., $\theta_i^{\star}$ for each robot, to best mimic/reproduce the demonstrations (from experts) using the \eqref{eq_NashP} of the parameterized game.
In the above definition of the loss (\ref{eq_lost}), we consider the robot's trajectories at each time instant to be equally important, but other definitions of the loss~\cite{jin2020pontryagin,RND-BJA-ZMA:06,ZBD-ALB-JAD-AK:08} are also applicable.


\section{Inverse Learning for Distributed Differential Dynamic Game} \label{Sec_Framekwork}


\subsection{Method Overview}

To solve the formulated problem, we develop a fully distributed learning paradigm, where each robot updates its own $\theta^{\star}_i$ for \ref{eq_OCP} using only its local data and neighboring communication. 
We are enlightened by local gradient descent to propose the following algorithm, 
\begin{equation}
\begin{aligned}
		&\theta_i^{k+1} = \theta_i^k-\eta^k\left.\frac{d{\mathcal{L}_i({\bm{\xi}^{\star}_i(\Theta)},\bm{\xi}_i^d)}}{d\theta_i}\right|_{\theta_i^k}
\end{aligned}
\label{eq_grad}
\end{equation}\noindent where $\eta^k$ is the learning rate. Compared with the global full gradient, local gradient descent requires stricter step sizes to ensure algorithm stability; however, it achieves significant computational tractability. Similar techniques are used in many machine learning methods, such as actor-critic methods, where the actor and critic models are updated in a decoupled manner~\cite{ZL-FT-AZ-CB-RLJ:22}.  
In addition, recall that the global and local loss functions defined in \eqref{eq_lost} and \eqref{eq_problem} are both non-negative. If the demonstrations and the generated trajectories can match perfectly, $\sum_{i-1}^m\mathcal{L}_i$ and $\mathcal{L}_i$ share the same minimizer at 0. The effectiveness of `local gradients' will be further justified by our experiments.

The implementation of update \eqref{eq_grad} is summarized in Algorithm \ref{Algorithm_DIDG}, and it relies on the following chain rule to compute the gradient. 
{\begin{equation}
\begin{aligned}
\left.\frac{d{\mathcal{L}_i({\bm{\xi}^{\star}_i(\Theta)},\bm{\xi}_i^d)}}{d\theta_i}\right|_{\theta_i^k}\!\!=\! 
	\left.\frac{\partial {\mathcal{L}_i({\bm{\xi}^{\star}_i(\Theta)},\bm{\xi}_i^d)}}{\partial\bm{\xi}^{\star}_i(\Theta)}\right|_{\bm{\xi}^{\star}_i(\Theta^k)}
	\!\cdot \!\left.{\frac{\partial\bm{\xi}^{\star}_i(\Theta)}{\partial\theta_i}}\right|_{\theta_i^k}.
\end{aligned}
\label{eq_gradchain}
\end{equation}}

\begin{algorithm2e}[t]
	\setstretch{1.0}
	\caption{Inverse Learning for Distributed Differential Dynamic Game, the local update for robot $i$..}
	\label{Algorithm_DIDG}
	\SetAlgoLined
	\textbf{Input} Demonstrations of trajectory $\bm{\xi}_i^d$.\\ 
	Initialize a random guess for $\theta_i^{k=0}$.\\
    \For{$k=0,1,2,\cdots$}
	{
	Compute {\small$\displaystyle\frac{\partial {\mathcal{L}_i({\bm{\xi}^{\star}_i(\Theta)},\bm{\xi}_i^d)}}{\partial\bm{\xi}^{\star}_i(\Theta)}$} based on definition \eqref{eq_lost}.\\
  Solving the forward problem of the dynamic game to obtain ${\bm{\xi}^{\star}_i(\Theta^k)}$. (cf. Algorithm \ref{Algorithm_GAME_shooting}, Appendix.)\\	
  Solving a diff-KKT condition to obtain {\small$\displaystyle{\frac{\partial\bm{\xi}^{\star}_i(\Theta)}{\partial\theta_i}}$}. (cf. Algorithm \ref{Algorithm_DPMPsolver}.)\\
  Compute {\small${\displaystyle\left.\frac{d{\mathcal{L}_i({\bm{\xi}^{\star}_i(\Theta)},\bm{\xi}_i^d)}}{d\theta_i}\right|_{\theta_i^k}}$} using \eqref{eq_gradchain}.\\
  Update: {\small$\displaystyle\theta_i^{k+1} = \theta_i^k-\eta^k\left.\frac{d{\mathcal{L}_i({\bm{\xi}^{\star}_i(\Theta)},\bm{\xi}_i^d)}}{d\theta_i}\right|_{\theta_i^k}$}.
	}
	\textbf{Output} $\theta_i$	
\end{algorithm2e}

For the first term of the chain rule, the derivative $\frac{\partial \mathcal{L}_i}{\partial\bm{\xi}^{\star}_i(\Theta)}$ is readily accessible because the function ${\mathcal{L}_i({\bm{\xi}^{\star}_i(\Theta)},\bm{\xi}_i^d)}$ is explicitly defined. Its evaluation point $\bm{\xi}^{\star}_i(\Theta^k)$ relies on solving the forward problem of the game to obtain its \eqref{eq_NashP} with current parameter $\Theta^k$. In this paper, we achieve this by employing an existing distributed Nash equilibrium-seeking algorithm proposed in~\cite{tatarenko2020geometric}. Since this is not the main contribution of the paper, we provide its implementation and justification in the Appendix.  

The major obstacle arises from the second term of the chain rule, where ${\nicefrac{\partial\bm{\xi}^{\star}_i(\Theta)}{\partial\theta_i}}$ characterizes the change in the robot's \eqref{eq_NashP} trajectories corresponding to the change from its local parameter. Given a general optimal control system, its solution trajectory $\bm{\xi}^{\star}_i(\Theta)$ does not admit an analytical form. Thus, one possible way to compute ${\nicefrac{\partial\bm{\xi}^{\star}_i(\Theta)}{\partial\theta_i}}$ is by numerical approximation~\cite{PJP-BK-EM:96}.
However, the feasibility of this approach is extremely challenging, due to the large number of robots and the complexity of their trajectories considered in this paper. Motivated by these, we next present a new \textit{distributed} method to compute ${\nicefrac{\partial\bm{\xi}^{\star}_i(\Theta)}{\partial\theta_i}}$, whose idea is based on differentiating the KKT condition~\cite{BD:12} of the \eqref{eq_NashP} with respect to the parameter $\Theta$~\cite{jin2020pontryagin}. This yields a new representation of the derivative that can significantly reduce its computation burden, and the computation can be performed in a \textit{distributed} fashion.




\subsection{A Fully Distributed Solver for Diff-KKT}
In this subsection, we introduce a distributed and efficient approach to compute the ${\nicefrac{\partial\bm{\xi}^{\star}_i(\Theta)}{\partial\theta_i}}$ in \eqref{eq_gradchain}. 
First, given $x_i^0$, define a compact form for robot $i$'s dynamics constraints
\begin{align}
    \mathbf{F}_i(\mathbf{x}_i,\mathbf{u}_i,\theta_i)=\begin{bmatrix}
        \quad x_i^1-f_i(x_i^0, u_i^0, \theta_i)\\
        \quad x_i^2-f_i(x_i^1, u_i^1, \theta_i)\\
        \vdots\\
        \quad x_i^T-f_i(x_i^{T-1}, u_i^{T-1}, \theta_i)
    \end{bmatrix}=\bm{0}.
\end{align}
The \eqref{eq_NashP} of a game is the collection of the optimal trajectories of the robots' local optimal control problems. Thus, define augmented functions 
\begin{align}\label{eq_def_hi}
    \mathbf{H}_i= \mathcal{J}_i(\mathbf{x}_i,\mathbf{x}_{\mathcal{N}_i},\mathbf{u}_i,\theta_i)+\bm{\lambda}_i^{\top}\mathbf{F}_i(\mathbf{x}_i,\mathbf{u}_i,\theta_i),
\end{align}
with $\bm{\lambda}_i=\{\lambda_1,\cdots,\lambda_m\}$ being the co-states of the dynamics constraints. Then for any $\Theta$, the trajectory $\bm{\xi}^{\star}_{i}(\Theta)=\{{\mathbf{x}^{\star}_{i}}(\Theta),{\mathbf{u}^{\star}_{i}}(\Theta)\}$ must satisfy a distributed discrete-time KKT~\cite{boyd2004convex} condition, which  reads: $\forall i\in\{1,\cdots,m\}$,
\begingroup
\addtolength{\jot}{0.1em}
\begin{subequations}\label{eq_kkt}
\begin{align}\label{eq_kkt_a}
    \frac{\partial \mathbf{H}_i}{\partial \mathbf{x}_i}=\frac{\partial \mathcal{J}_i}{\partial \mathbf{x}_i}+\bm{\lambda}_i^{\top}\frac{\partial \mathbf{F}_i}{\partial \mathbf{x}_i}=\bm{0}\\\label{eq_kkt_b}
    \frac{\partial \mathbf{H}_i}{\partial \mathbf{u}_i}=\frac{\partial \mathcal{J}_i}{\partial \mathbf{u}_i}+\bm{\lambda}_i^{\top}\frac{\partial \mathbf{F}_i}{\partial \mathbf{u}_i}=\bm{0}\\\label{eq_kkt_c}
    \frac{\partial \mathbf{H}_i}{\partial \bm{\lambda}_i}=~~~~~~~~~~~~~~~\mathbf{F}_i=\bm{0}
\end{align}
\end{subequations}
Now, to obtain the ${\nicefrac{\partial\bm{\xi}^{\star}_i(\Theta)}{\partial\theta_i}}$, our idea is to differentiate equation \eqref{eq_kkt} with respect to $\Theta$. This will provide us with a neat and easy-to-solve equation set that directly takes ${\nicefrac{\partial\bm{\xi}^{\star}_i(\Theta)}{\partial\theta_i}}$ as variables. 
To visualize this, define
\begin{align}\label{eq_dist_DPMP2}
	\bm{X}_{i}=\frac{\partial {\mathbf{x}_i^{\star}}(\Theta)}{\partial\Theta}, ~\bm{U}_{i}=\frac{\partial {\mathbf{u}_i^{\star}}(\Theta)}{\partial\Theta}, ~\bm{\Lambda}_{i}=\frac{\partial {\bm{\lambda}^{\star}_i}(\Theta)}{\partial\Theta}.
\end{align}
Since all variables in \eqref{eq_dist_DPMP2} are functions of $\Theta$, differentiating \eqref{eq_kkt} with respect to $\Theta$ yields the following \textbf{Diff-KKT}:
\begin{subequations}\label{eq_diffkkt}
\begin{align}
    M_i^{\alpha}\bm{X}_i+ N_i^{\alpha}\bm{U}_i    +\sum_{j\in\mathcal{N}_i}Q_{ij}^{\alpha}\bm{X}_j+S_i^{\alpha}\bm{\Lambda}_i+C_i^{\alpha}=\bm{0}\\
    M_i^{\beta}\bm{X}_i+ N_i^{\beta}\bm{U}_i    +\sum_{j\in\mathcal{N}_i}Q_{ij}^{\beta}\bm{X}_j+S_i^{\beta}\bm{\Lambda}_i+C_i^{\beta}=\bm{0}\\
    M_i^{\gamma}\bm{X}_i+ N_i^{\gamma}\bm{U}_i+C_i^{\gamma}=\bm{0}
\end{align}
\end{subequations}
with the application of the chain rule on the derivatives of $\mathbf{x}_i^{\star}(\Theta)$ and $\mathbf{u}_i^{\star}(\Theta)$ and $\bm{\lambda}_i^{\star}(\Theta)$ with respect to $\Theta$:
\begin{subequations}\label{eq_def_MNQ}
{\small\begin{align}
\begin{aligned}
    M_i^{\alpha}=\frac{\partial^2 \mathbf{H}_i}{\partial {\mathbf{x}^\star_i}^2 },~
    N_i^{\alpha}=\frac{\partial^2 \mathbf{H}_i}{\partial \mathbf{x}_i^\star \partial \mathbf{u}_i^\star },~
    Q_{ij}^{\alpha}=\frac{\partial^2 \mathbf{H}_i}{\partial \mathbf{x}_i^\star \partial \mathbf{x}_j^\star } \\
    S_{i}^{\alpha}=\frac{\partial^2 \mathbf{H}_i}{\partial \mathbf{x}_i^\star \partial \bm{\lambda}_i^\star },~
    C_i^{\alpha}=\frac{\partial^2 \mathbf{H}_i}{\partial \mathbf{x}_i^\star \partial \theta_i^k }\end{aligned} \\
    \begin{aligned}
    M_i^{\beta}=\frac{\partial^2 \mathbf{H}_i}{\partial \mathbf{u}_i^\star \partial \mathbf{x}_i^\star },~
    N_i^{\beta}=\frac{\partial^2 \mathbf{H}_i}{ \partial {\mathbf{u}_i^\star}^2 },~
    Q_{ij}^{\beta}=\frac{\partial^2 \mathbf{H}_i}{\partial \mathbf{u}_i^\star \partial \mathbf{x}_j^\star }\\
    S_{i}^{\beta}=\frac{\partial^2 \mathbf{H}_i}{\partial \mathbf{u}_i^\star \partial \bm{\lambda}_i^\star },~
    C_i^{\beta}=\frac{\partial^2 \mathbf{H}_i}{\partial \mathbf{u}_i^\star \partial \theta_i^k }\end{aligned}\\
    M_i^{\gamma}=\frac{\partial^2 \mathbf{H}_i}{\partial \bm{\lambda}_i^\star \partial \mathbf{x}_i^\star },~
    N_i^{\gamma}=\frac{\partial^2 \mathbf{H}_i}{ \partial \bm{\lambda}_i^\star \partial\mathbf{u}_i^\star },~
    C_i^{\gamma}=\frac{\partial^2 \mathbf{H}_i}{\partial \bm{\lambda}_i^\star \partial \theta_i^k }
\end{align}}
\end{subequations}
\endgroup 
where we use $\frac{\partial^2 \mathbf{H}_i}{\partial {\sigma}_i^\star \partial {\mu}_i^\star }$ to denote the seconder-order derivative of $\mathbf{H}_i(\cdot)$ evaluated at $\{{\sigma^{\star}_i}(\Theta), {\mu^{\star}_i} (\Theta)\}$. All equations in \eqref{eq_def_MNQ} are simple numerical matrices and are readily computable from \eqref{eq_kkt}, because $\mathbf{H}_i(\cdot)$ is explicitly defined and  $\{\mathbf{x}_i^\star(\Theta), \mathbf{u}_i^\star(\Theta)\}, \bm{\lambda}_i^\star(\Theta)$ are obtained from forward Nash seeking algorithm (described in the Appendix) given the current $\Theta$.  
To remark the effectiveness of reformulation, given Assumption \ref{Ass_func}, results in \cite[Sec. 5.9.2]{boyd2004convex} implies the existence and uniqueness of solution to \eqref{eq_kkt}; results in~\cite[Theorem 1]{jin2021safe} implies the uniqueness of $\mathbf{X}_i$ and $\mathbf{U}_i$ in \eqref{eq_diffkkt}.

\begin{algorithm2e}[t]
	\setstretch{1.0}
	\caption{Distributed Solver for Diff-KKT, the local update for robot $i$.}
	\label{Algorithm_DPMPsolver}
	\SetAlgoLined
	\textbf{Input}  $\bm{\xi}_i^{\star}(\Theta^k)$, $\theta_i^k$.\\
	\textit{Compute} ${\bm{\lambda}^{\star}_{i}}(\Theta^k)$ using equations \eqref{eq_kkt} with $\bm{\xi}^{\star}_{i}(\Theta)=\{{\mathbf{x}^{\star}_{i}}(\Theta^k),{\mathbf{u}^{\star}_{i}}(\Theta^k)\}$.\\
	\textit{Compute} matrices $\bm{A}_{i,i}$ and $\bm{A}_{i,j}$, $j\in\mathcal{N}_i$ by \eqref{eq_def_MNQ} and \eqref{eq_defYAA}.\\
	
	\textit{Acquire} matrices $\bm{A}_{\ell,i}$, $\ell\in\mathcal{N}_i$ from each neighbor $\ell$ of robot $i$. Assign $\bm{A}_{\ell,i}=\bm{0}$ for $\ell\notin \mathcal{N}_i$.\\
	
	\textit{Compose} matrices $\bm{\Psi}_i$, $\widehat{\bm{C}}_i$ by their definitions.
	
	\textit{Initialize} $\tau=0$, $\delta\in\mathbb{R}_{+}$, and $\bm{Y}_i^{\tau=0}$, $\bm{Z}_i^{\tau=0}$ as random matrices with proper sizes.
	
	\While{$\max_i(|\bm{Y}_i^{\tau+1}-\bm{Y}_i^{\tau}|)\ge\epsilon_{\bm{Y}}$ \label{st_start}}
	{
   Exchange states $\bm{Z}_i^{\tau}$ among neighboring robots.\\
   State update:
   \begin{align*}
       &\bm{v}_i^\tau=\bm{\Psi}_i\bm{Y}_i^\tau-\widehat{\bm{C}}_i-\sum_{\ell\in\mathcal{N}_i}(\bm{Z}_{i}^\tau-\bm{Z}_\ell^\tau).\\
		&\bm{Y}_i^{\tau+1}=\bm{Y}_i^\tau-\delta\bm{\Psi}_i^{\top}\bm{v}_i^\tau\\  
		&{\bm{Z}}_i^{\tau+1}={\bm{Z}}_i^\tau+\delta\bm{v}_i^\tau
   \end{align*}
	}\label{st_end}
	\textit{Obtain} $\bm{X}_i$, $\bm{U}_i$ from $\bm{Y}_i^\tau$ based on \eqref{eq_defYAA}.\\
	\textbf{Output} $\displaystyle\left.{\frac{\partial\bm{\xi}^{\star}_i(\Theta)}{\partial\theta_i}}\right|_{\theta_i^k}$ from $\left\{\bm{X}_i,\bm{U}_i\right\}$ based on \eqref{eq_dist_DPMP2}.
\end{algorithm2e}

\noindent{\textbf{Distributed Diff-KKT Solver:}}
Solving (\ref{eq_dist_DPMP2}) from \eqref{eq_diffkkt} gives us the  gradient ${\frac{\partial\bm{\xi}^{\star}_i(\Theta)}{\partial\theta_i}}$ for each robot. However, solving the equation in a centralized manner is not scalable as the robot number grows.
To address this, we notice that the coupled terms, i.e., $Q_{ij}, \bm{X}_j$, in (\ref{eq_diffkkt}) only exist among connected neighbors $j\in\mathcal{N}_i$. This motivates us to develop a fully distributed solver to compute the gradient. To that end, we rewrite all variables and matrices into a compact linear equation form.
\begin{align}\label{eq_dist_DPMPLEQ2}
		\bm{A}_{i,i} \bm{Y}_i+\sum_{j\in\mathcal{N}_i}({\bm{A}}_{i,j}\bm{Y}_j)+\overline{C}_i=\bm{0}.
\end{align}
where for all $i$ and $j\in\mathcal{N}_i$,
\begingroup
\renewcommand*{\arraystretch}{1.2}
\begin{equation}
\begin{aligned}\label{eq_defYAA}
		\bm{A}_{i,i}=\begin{bmatrix}
			{M}_i^\alpha~~{N}_i^\alpha~~{S}_i^\alpha\\
   {M}_i^\beta~~{N}_i^\beta~~{S}_i^\beta\\
   {M}_i^\gamma~~{N}_i^\gamma~~{S}_i^\gamma
		\end{bmatrix},~~~
	\bm{Y}_i=\begin{bNiceMatrix}[l]
	\bm{X}_i\\\bm{U}_i\\\bm{\Lambda}_i
\end{bNiceMatrix}
\\[2pt]
	\bm{A}_{i,j}=\begin{bmatrix}
		{Q}_{i,j}^\alpha~~\bm{0}~~\bm{0}\\
  {Q}_{i,j}^\beta~~\bm{0}~~\bm{0}\\
  {Q}_{i,j}^\gamma~~\bm{0}~~\bm{0}
	\end{bmatrix},~~~
 \overline{C}_i=\begin{bmatrix}
		C_i^\alpha\\
  C_i^\beta\\
  C_i^\gamma
	\end{bmatrix}
\end{aligned}
\end{equation}
\endgroup
where $\bm{Y}_i$ is the local unknown of robot $i$, $\bm{A}_{i,i}$ and $\bm{A}_{i,j}$ are known matrices, and $\bm{Y}_j$, $j\in\mathcal{N}_i$ is the coupled unknown from $i$'s neighbors. Since each robot in the network possesses an equation in the form of \eqref{eq_dist_DPMPLEQ2}, to compute a set of $\bm{Y}_i$, $i\in\until{m}$ satisfying all these equation, we essentially need to solve the following  compact equation set
\begin{align}\label{eq_GLE}
	\sum_{i=1}^{m}(\bm{\Psi}_i\bm{Y}_i+\widehat{\bm{C}}_i)=0 
\end{align}
where $\bm{\Psi}_i =\begin{bmatrix}
\bm{A}_{1,i}^{\top}, ...,\bm{A}_{m,i}^{\top}
	\end{bmatrix}^{\top}$, $\widehat{\bm{C}}_i =\begin{bmatrix}
			\bm{0},...,\overline{C}_{i}^{\top},...,\bm{0}
		\end{bmatrix}^{\top}$. The matrix is a zero matrix if undefined.
In $\widehat{\bm{C}}_i$, the matrix $\overline{C}_{i}^{\top}$ is located at the $i$th block. By stacking the matrices $\bm{A}_{\ell,i}$ and $\overline{C}_{i}^{\top}$, each row block of \eqref{eq_GLE} is associated with one  \eqref{eq_dist_DPMPLEQ2} for $i\in\until{m}$. Further note that network $\mathbb{G}$ is undirected, i.e., $i\in\mathcal{N}_\ell$ yields $\ell\in\mathcal{N}_i$, thus, robot $i$ has access to $\bm{\Psi}_i$ based on its local communication with its neighbors.
Now, suppose each robot $i$ knows $\bm{\Psi}_i$ and $\widehat{\bm{C}}_i$, we introduce Algorithm \ref{Algorithm_DPMPsolver} for the robots to efficiently solve its $\bm{Y}_i$.

Algorithm \ref{Algorithm_DPMPsolver} is fully distributed, in the sense that the computation of each robot only relies on its own state and the states of its neighbors. It leverages our preliminary result in~\cite{XSB19TAC}. The convergence of the algorithm is characterized by the following result with its proof in the Appendix.
\begin{lemma}\longthmtitle{Validity of Algorithm \ref{Algorithm_DPMPsolver}} \label{LM_alg3}
	Suppose the network $\mathbb{G}$ is undetected and connected, suppose equation set \eqref{eq_GLE} has a unique solution, by Algorithm \ref{Algorithm_DPMPsolver}, if the positive step-size $\delta$ is sufficiently small, the state $\bm{Y}_i^{\tau}$ of robot $i$ will converge asymptotically to a state $\bm{Y}_i^{\star}$, where the set of $\{\bm{Y}_i^{\star},i=1\cdots,m\}$ forms a solution to \eqref{eq_GLE}.
\end{lemma}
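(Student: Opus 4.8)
The plan is to recast Algorithm~\ref{Algorithm_DPMPsolver} as a single affine linear iteration and study its spectrum. Stacking the agents' variables as $\bm Y=\col\{\bm Y_1,\dots,\bm Y_m\}$ and $\bm Z=\col\{\bm Z_1,\dots,\bm Z_m\}$, letting $\bm\Psi$ be the block-diagonal matrix with diagonal blocks $\bm\Psi_i$, writing $\widehat{\bm C}=\col\{\widehat{\bm C}_1,\dots,\widehat{\bm C}_m\}$, and denoting by $\bm{\mathcal L}=\bm L\otimes\bm I$ the graph Laplacian (symmetric because $\mathbb G$ is undirected), the residual step becomes $\bm v^{\tau}=\bm\Psi\bm Y^{\tau}-\widehat{\bm C}-\bm{\mathcal L}\bm Z^{\tau}$ and the two updates collapse to
\[
\begin{bmatrix}\bm Y^{\tau+1}\\ \bm Z^{\tau+1}\end{bmatrix}
=(\bm I-\alpha\bm G)\begin{bmatrix}\bm Y^{\tau}\\ \bm Z^{\tau}\end{bmatrix}
+\alpha\begin{bmatrix}\bm\Psi^{\top}\widehat{\bm C}\\ -\widehat{\bm C}\end{bmatrix},
\qquad
\bm G=\begin{bmatrix}\bm\Psi^{\top}\bm\Psi & -\bm\Psi^{\top}\bm{\mathcal L}\\ -\bm\Psi & \bm{\mathcal L}\end{bmatrix}.
\]
A fixed point forces $\bm v=0$; multiplying $\bm\Psi\bm Y-\widehat{\bm C}=\bm{\mathcal L}\bm Z$ by $\bm 1^{\top}\otimes\bm I$ and using $\bm 1^{\top}\bm L=0$ recovers $\sum_i(\bm\Psi_i\bm Y_i-\widehat{\bm C}_i)=0$, i.e.\ \eqref{eq_GLE} (after reconciling the sign convention of $\widehat{\bm C}$, which does not affect the argument). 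Conversely, connectivity makes $\mathrm{im}\,\bm{\mathcal L}$ exactly the zero-block-sum subspace $\ker(\bm 1^{\top}\otimes\bm I)$, so any solution of \eqref{eq_GLE} produces a $\bm Z^{\star}$ with $\bm v=0$; hence the solvability hypothesis guarantees a fixed point exists and that the affine term lies in $\mathrm{im}\,\bm G$.

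The core of the proof is a spectral computation for $\bm G$. For an eigenpair $(\mu,(\bm y,\bm z))$ with $\mu\neq0$, the second block row gives $\bm\Psi\bm y-\bm{\mathcal L}\bm z=-\mu\bm z$ and substituting into the first gives $\bm y=-\bm\Psi^{\top}\bm z$, whence $(\bm\Psi\bm\Psi^{\top}+\bm{\mathcal L})\bm z=\mu\bm z$. Thus every nonzero eigenvalue of $\bm G$ is an eigenvalue of the symmetric positive semidefinite matrix $\bm H:=\bm\Psi\bm\Psi^{\top}+\bm{\mathcal L}$, so it is real and strictly positive. In particular $\bm G$ has no eigenvalue with negative real part and no nonzero eigenvalue on the imaginary axis, which is precisely what a ``sufficiently small step size'' statement needs.

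It then remains to handle the marginal eigenvalue $\mu=0$, and this is where I expect the delicate point to be. I would show $0$ is semisimple by verifying $\ker\bm G=\ker\bm G^{2}$: if $\bm G^{2}\bm w=0$, the residual direction $\bm q=\bm\Psi\bm y-\bm{\mathcal L}\bm z$ of $\bm w$ satisfies $\bm H\bm q=0$, so $\bm q\in\ker\bm\Psi^{\top}\cap\ker\bm{\mathcal L}=(\mathrm{im}\,\bm\Psi+\mathrm{im}\,\bm{\mathcal L})^{\perp}$; but $\bm q\in\mathrm{im}\,\bm\Psi+\mathrm{im}\,\bm{\mathcal L}$ by definition, forcing $\bm q=0$ and hence $\bm G\bm w=0$. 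Consequently the ambient space splits as $\ker\bm G\oplus\mathrm{im}\,\bm G$ with both subspaces $\bm G$-invariant, and ruling out a Jordan block at $0$ is exactly what prevents a linear drift of the iterate.

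Finally I would assemble convergence. For $0<\alpha<2/\lambda_{\max}(\bm H)$, the map $\bm I-\alpha\bm G$ acts as the identity on $\ker\bm G$ and, on $\mathrm{im}\,\bm G$, has all eigenvalues $1-\alpha\mu$ of modulus strictly less than one, hence is a contraction there. Writing $\bm W^{\tau}=\bm W^{\star}+\bm\delta^{\tau}$ for a fixed point $\bm W^{\star}$ and splitting $\bm\delta^{0}$ across the two invariant subspaces, the $\mathrm{im}\,\bm G$ component decays to zero while the $\ker\bm G$ component stays put, so $\bm W^{\tau}$ converges to a (generally initialization-dependent) fixed point. In particular each $\bm Y_i^{\tau}\to\bm Y_i^{\star}$ with $\{\bm Y_i^{\star}\}$ solving \eqref{eq_GLE}, from which $\bm X_i^{0:T}$ and $\bm U_i^{0:T-1}$ are read off. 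Throughout, both key facts---the identity relating $\mathrm{spec}(\bm G)$ to $\bm H$ and the orthogonal-complement characterization of $\ker\bm H$---rely on $\mathbb G$ being undirected and connected; an equivalent route is to exhibit a Lyapunov function certifying the same spectral picture.
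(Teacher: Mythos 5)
Your proof is correct, and it takes a genuinely different---and far more self-contained---route than the paper. The paper proves this lemma essentially by citation: it observes that the update in Algorithm~\ref{Algorithm_DPMPsolver} is a (discretized) instance of the distributed network-flow solver of \cite{XSB19TAC}, imports convergence for sufficiently small $\alpha$ from there, and the only in-line verification it offers is the easy half, namely that an equilibrium forces $\bm{v}_i^{\star}=0$ for all $i$, after which summing over $i$ and using undirectedness to cancel the $\sum_{\ell\in\mathcal{N}_i}(\bm{Z}_i^{\star}-\bm{Z}_\ell^{\star})$ terms recovers \eqref{eq_GLE}. You instead prove everything from scratch: existence of a fixed point (via connectivity, which makes $\mathrm{im}(\bm{L}\otimes\bm{I})$ exactly the zero-block-sum subspace, so solvability of \eqref{eq_GLE} lifts to a pair $(\bm{Y}^{\star},\bm{Z}^{\star})$ with $\bm{v}=0$), the reduction of the nonzero spectrum of $\bm{G}$ to that of the symmetric PSD matrix $\bm{H}=\bm{\Psi}\bm{\Psi}^{\top}+\bm{\mathcal{L}}$, and semisimplicity of the zero eigenvalue via $\ker\bm{G}=\ker\bm{G}^{2}$. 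This buys an explicit step-size bound $\alpha<2/\lambda_{\max}(\bm{H})$ and an explanation of why the limit is initialization-dependent yet always a solution of \eqref{eq_GLE}---neither of which the paper makes explicit. Two minor points: the sign mismatch you flag between $-\widehat{\bm{C}}_i$ in the algorithm and $+\widehat{\bm{C}}_i$ in \eqref{eq_GLE} is indeed an inconsistency in the paper itself, not an artifact of your stacking; and ``contraction on $\mathrm{im}\,\bm{G}$'' should strictly be read as ``spectral radius less than one'' (which suffices for the asymptotic convergence the lemma claims), unless you additionally note that $\bm{G}$ factors as $\bm{B}\bm{A}$ with $\bm{A}\bm{B}=\bm{H}$ symmetric, which makes the nonzero eigenvalues semisimple and upgrades the statement to geometric convergence in a suitable norm.
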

\section{Experiments} \label{Sec_sim}
This section presents simulation experiments to validate the effectiveness, scalability, and generalizability of the proposed D3G approach for multi-robot coordination. The system includes two types of robots: TurtleBot3 Burger and Waffle in Fig. \ref{Fig_robmodel}. We consider heterogeneous settings, where each robot has different dynamics, such as different radii, weights, and velocity/angular ranges. 
Four scenarios are used: (a) fixed swapping in open ground, (b) formation initialization using the environment in the introductory Fig. \ref{ALG_demo}, (c) cooperative payload transportation, and (d) formation maintenance using the environments in Fig. \ref{Fig_GazeboEnv}.
Simulations are done in Gazebo via ROS. Robots can communicate with each other, but all computations are performed locally. 


\begin{figure}[h]
	\vspace{-1ex}
	\centering
	\begin{subfigure}[h]{0.12\textwidth}
		\centering
		\includegraphics[width=\textwidth]{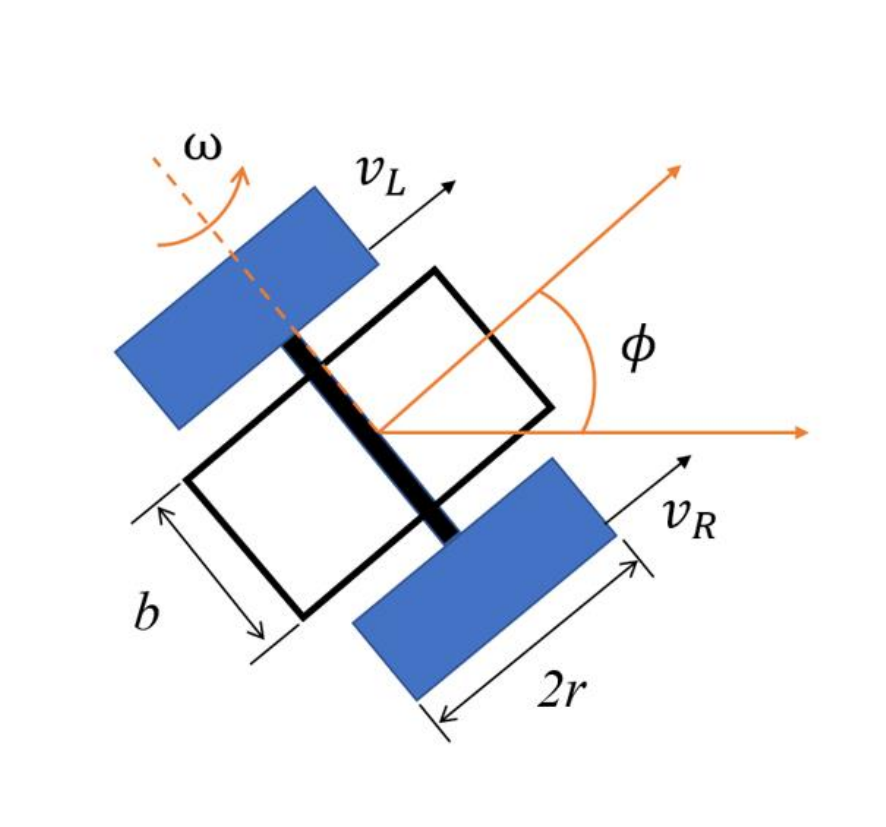}
		\caption{Differential- drive model. }
		\label{Fig_diffd}
	\end{subfigure}
	\begin{subfigure}[h]{0.12\textwidth}
		\centering
		\includegraphics[width=\textwidth]{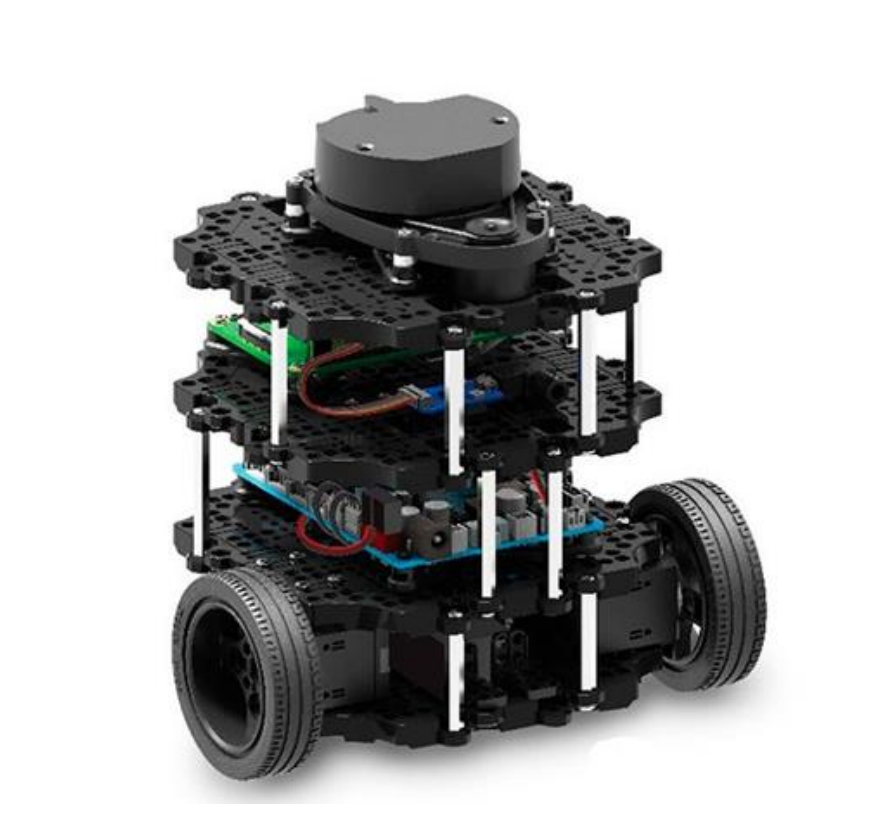}
		\caption{Turtlebot3 burger.}
		\label{Fig_tb3_burger}
	\end{subfigure}
	\begin{subfigure}[h]{0.12\textwidth}
		\centering
		\includegraphics[width=\textwidth]{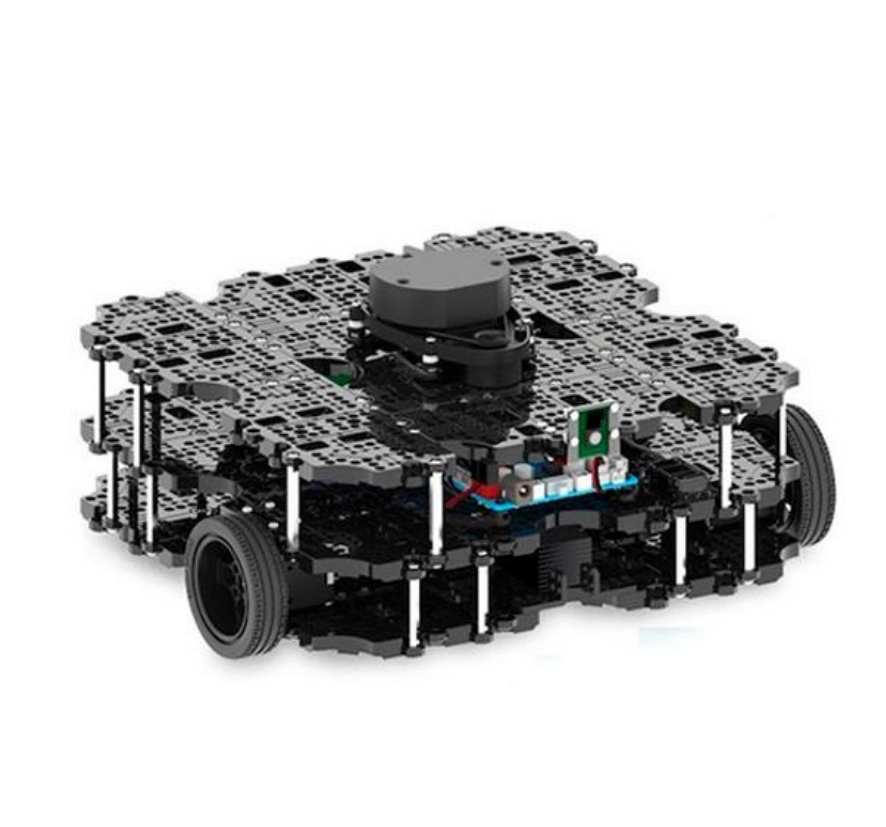}
		\caption{Turtlebot3 waffle.}
		\label{Fig_tb3_waffle}
	\end{subfigure}
\vspace{-.5ex}	
 \caption{Experimental Platform: Turtlebot3 and its model.\vspace{-1.5ex}}
	\label{Fig_robmodel}
\end{figure}

\begin{figure}[t]
	\vspace{-1ex}
	\centering
	\includegraphics[width=0.45\textwidth]{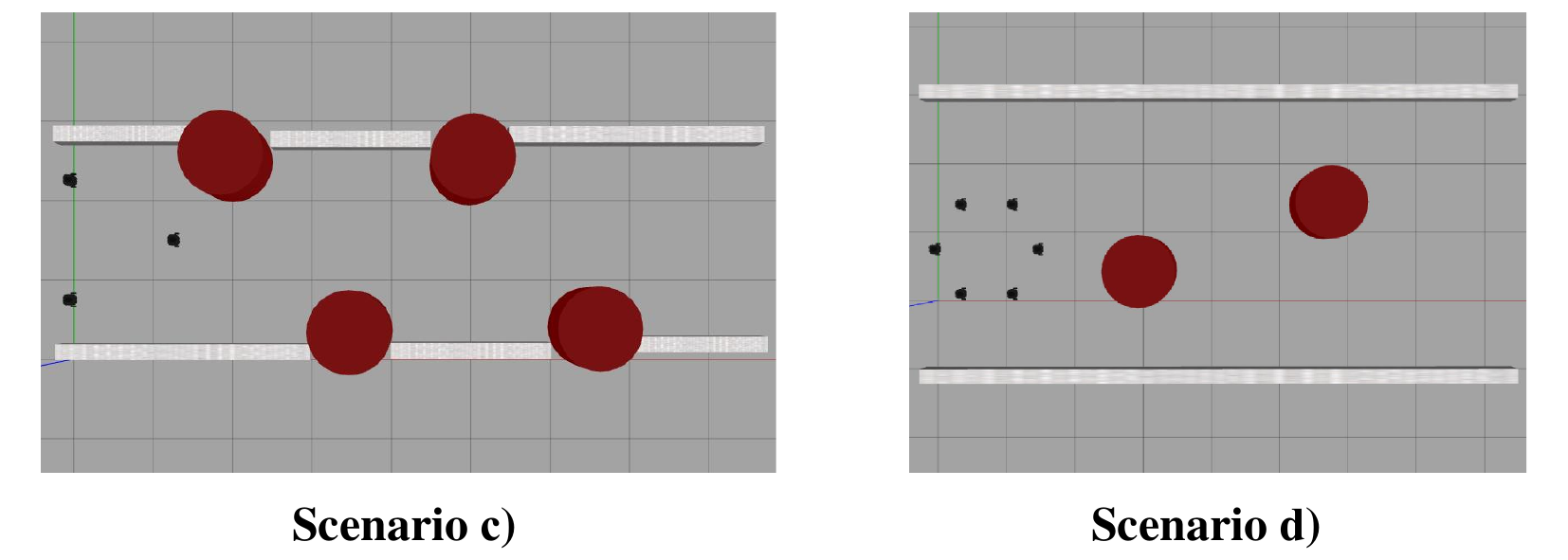}
\vspace{-.5ex}	
 \caption{Gazebo environment for scenarios c) and d).}
	\label{Fig_GazeboEnv}
	\vspace{-.5ex}
\end{figure}



\noindent\textbf{Parameterization of objective functions:} Fcuntion $\mathcal{J}_i$ is parameterized by considering a linear combination of the following cost terms with unknown weights: \textit{formation maintenance}, which defines the positional relationship of neighboring robots in terms of their relative positions, distances, or velocities; \textit{risk/obstacle avoidance}, which employs a reciprocal function to repel robots from given risk areas; \textit{collision avoidance}, which utilizes a reciprocal function to prevent robots from colliding with each other; and \textit{waypoint following}, which provides sparse navigation cues for navigating complex environments. We note that these functions satisfy Assumption~\ref{Ass_func}.


\noindent\textbf{Experiment Settings in Each Scenario}: 
We invite humans to create several sets of trajectories (incorporating human-induced random noise to optimal coordination trajectories computed from N.E. of a game with parameter $\Theta^{\star}$) to serve as the expert demonstration data. Using Algorithm \ref{Algorithm_DIDG}, we learn $\theta_i^{\star}$ for each robot from those demonstrations. Additionally, for each scenario, we test the generalizability of the learned objective functions by applying them in a new environment where the robots can still generate appropriate coordinated behaviors.
Details of simulation setups and results are as follows:

\noindent \textbf{Scenario a)}: We solve a multi-robot fixed swapping task. As shown in Fig.~\ref{Fig_Scenario4}-a, in the demonstrations, six robots are initialized around a circle-like formation. 
Each robot navigates to the diagonally opposite goal position on the other side of the circle. Throughout the process, they must dynamically adjust their positions to move without colliding. We test the generalization of the learned objective function with an increased number of robots, and the task is accomplished very well. Fig.\ref{Fig_Scenario4}-b shows an example with \textbf{sixteen} robots.
\begin{figure}[H]
	\vspace{-1em}
	\centering
	\begin{subfigure}[h]{0.45\textwidth}
            \centering
		\includegraphics[width=\textwidth]{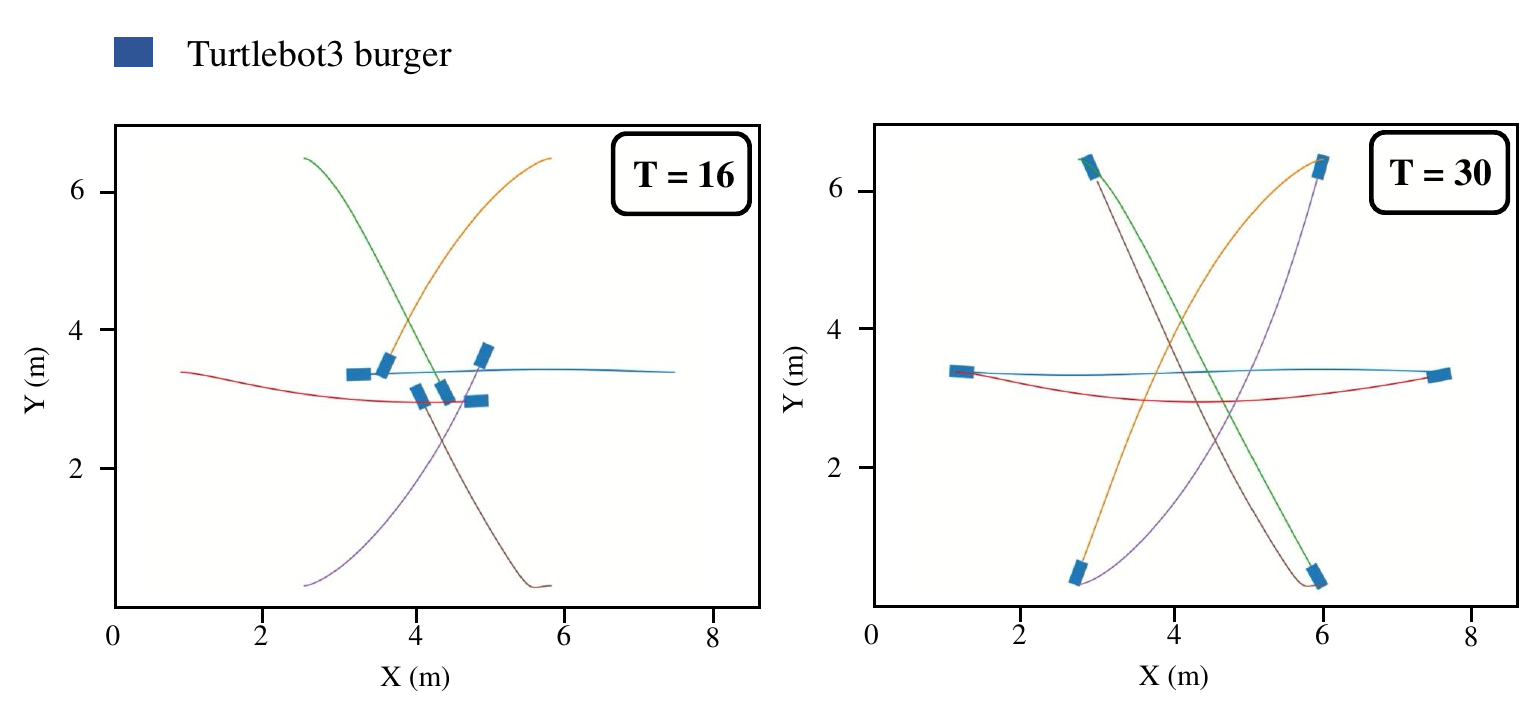}
		\caption{\small Inverse game solving: learning the fixed swapping task from demonstrations. (Right: the reproduced motion)}
		\label{Fig_s0_learn}
	\end{subfigure}
	\begin{subfigure}[h]{0.45\textwidth}
		\centering
		\includegraphics[width=\textwidth]{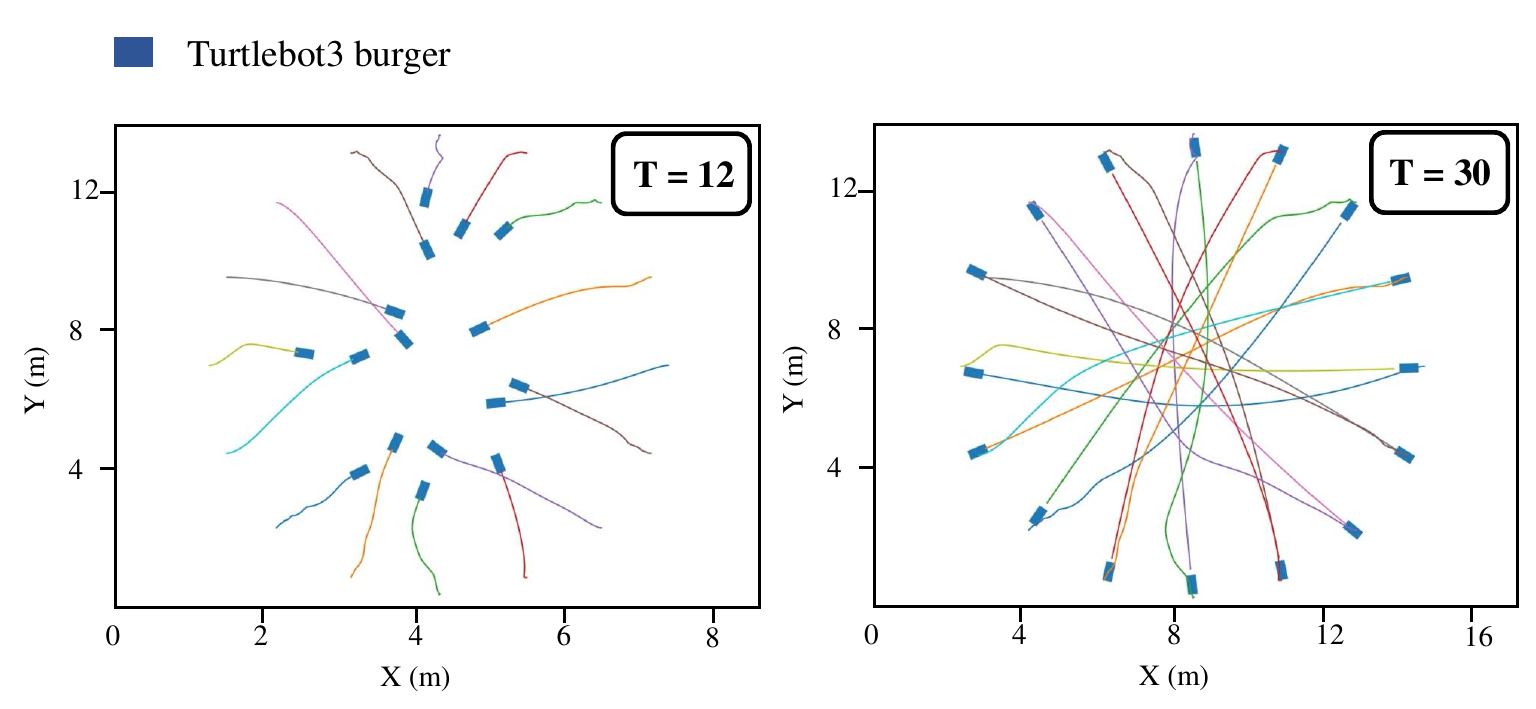   }
		\caption{\small Generalization of the learned objective with 16 robots.}
		\label{Fig_s0_valid}
	\end{subfigure}
 \vspace{-.5ex}
	\caption{Learning fixed swapping tasks with sixteen robots.}
    \label{Fig_Scenario4}
\end{figure}

\noindent\textbf{Scalability of Distributed Solver}: 
Using different numbers of robots in scenario (a), we compare the computational scalability of the proposed algorithm with the GT-IRL~\cite{CK-XL:22} and IKKT~\cite{EP-VNA-TM:17} methods. The comparison result is presented in Fig.\ref{Fig_time}. 
Here, D3G is evaluated based on the per-iteration time of Algorithm 1, which requires the convergence of Algorithm 2 for the inverse pass and Algorithm 3 for the forward pass. 
Since both algorithms are gradient-based and are sensitive to initial values, we use the result of the last iteration in Algorithm 1 as the initial values for the new iteration. 
The stopping criteria are chosen such that the variables do not change $1\%$ of their initial values (around hundreds of iterations).
For GT-IRL, its forward pass employs a similar but centralized gradient-based method to solve a dynamic game, and the inverse pass uses a centralized linear equation solver. The IKKT method uses a constraint optimization formulation, which is solved iteratively without a forward/inverse structure. From Fig.~\ref{Fig_time} and the trend of the data, we observe that as the number of robots increases, D3G outperforms both GT-IRL and IKKT in terms of computation time. The inverse pass of D3G outperforms GT-IRL. For D3G, the local computation of each robot is not significantly affected by the system size as the others, thanks to the distributed nature of the algorithm. The increase in time is mainly because Algorithms 2 and 3 require more iterations to converge. In contrast, for centralized algorithms, the computation time grows quickly due to the increase in the number of variables and constraints.

\begin{figure}[h]
	\vspace{-1ex}
	\centering
	\includegraphics[width=0.46\textwidth]{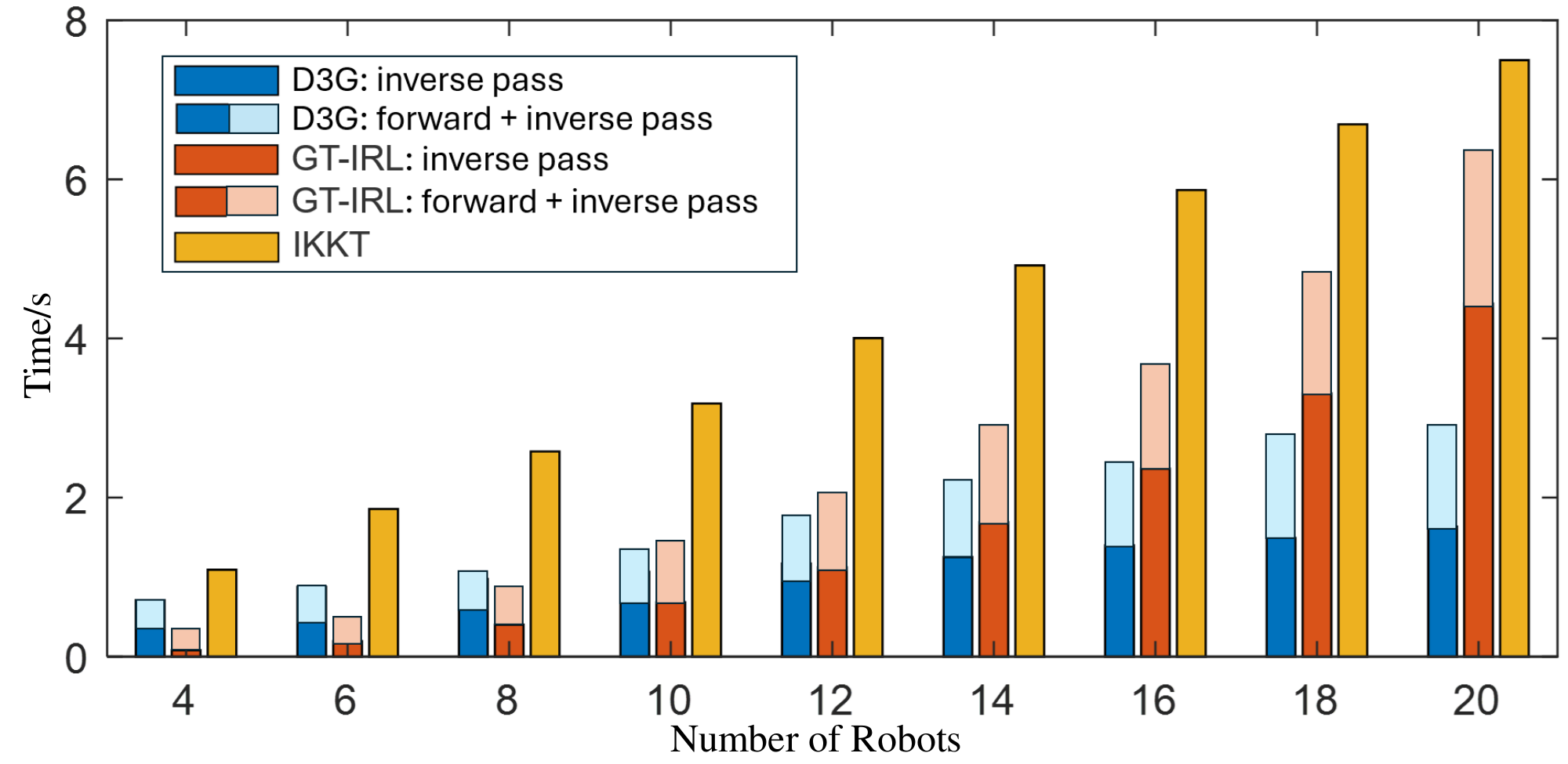}
	\caption{\small Comparison of computation time with GT-IRL and IKKT.
        \label{Fig_time}
 }
\end{figure}

\noindent\textbf{Scenario b)}: 
As shown in the introductory example in Fig. \ref{ALG_demo}, three robots start from initial positions at $0$ speed to initialize a linear formation at the goal position, maintaining distances of $0.8$m and velocities of $0.2$m/s. There exists a wide obstacle that robots have to avoid. 
From the demonstrations, the robots learn to adjust their formation to a `compact' shape when moving through the narrow space, then recover and form the desired formation at target positions. 
To test the generalization of the learned objective functions, we solve the learned game but change the obstacle's opening position from the middle to the side. The robots can still generate proper coordination to initialize the formation.

\noindent \textbf{Scenario c)}: As shown in Fig.~\ref{Fig_Scenario2}, three turtlebots start from different initial positions and cooperatively transport a slung payload. We assume each robot is attached to the payload with a length tether visualized in Fig.~\ref{Fig_s2_learn}. The payload has to maintain clearance from the ground. In addition, to stabilize the payload and prevent excessively large forces between the robots and the payload, the robot team will learn to maintain an equilateral triangle-like form, and keep the payload in its centroid. For simplicity, we ignore the dynamics of the payload but only consider the equilibrium point as its location. 
By learning robots' local objective functions, the reconstructed trajectories are shown in the right plot of Fig.~\ref{Fig_s2_learn}. We then test the generalization of the model in a new environment. In Fig.~\ref{Fig_s2_valid}, the placement of obstacles requires more sophisticated robot maneuvers. The height of the payload is still well maintained, and the robot team keeps the payload in its centroid as much as possible for stable moving.



\begin{figure}[ht]
	\vspace{-1ex}
	\centering
	\begin{subfigure}[h]{0.47\textwidth}
            \centering
		\includegraphics[width=\textwidth]{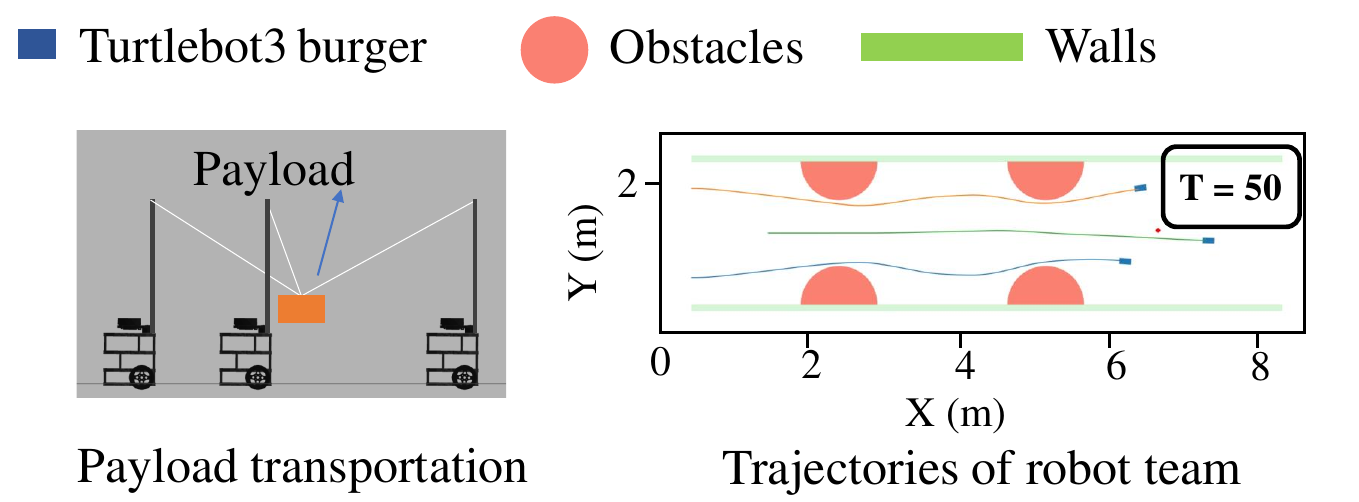}
		\caption{\small Inverse game solving: learning payload transportation from demonstrations. (Right: the reproduced motion at convergence)}
		\label{Fig_s2_learn}
	\end{subfigure}
	\begin{subfigure}[h]{0.47\textwidth}
		\centering
		\includegraphics[width=\textwidth]{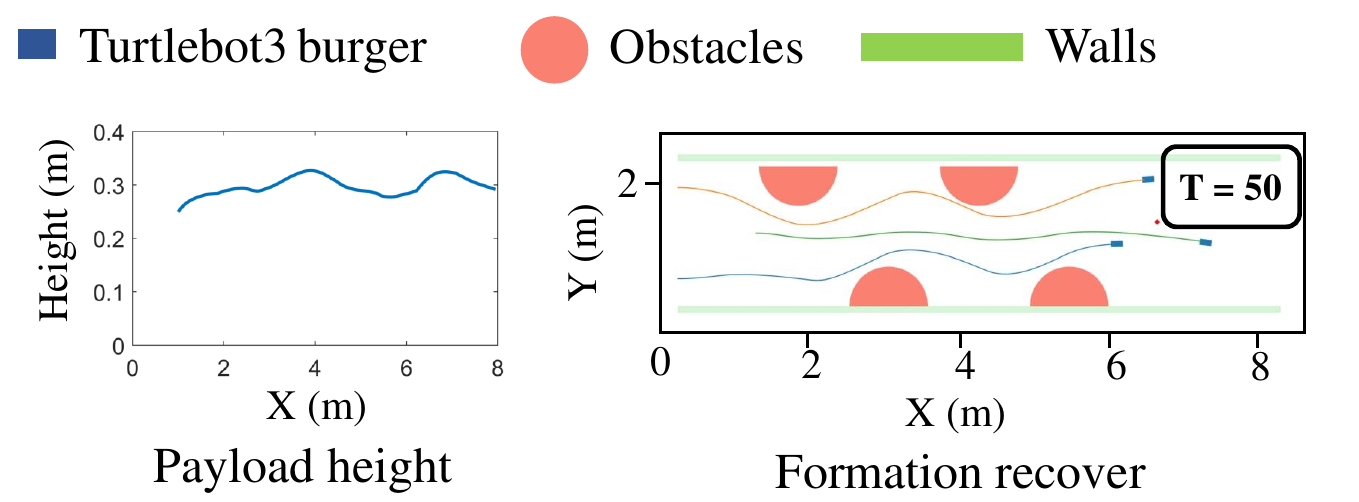}
		\caption{\small Generalization of the learned objective in a new environment.}
		\label{Fig_s2_valid}
	\end{subfigure}
 \vspace{-.5ex}
	\caption{Learning payload transportation with three robots.}
    \label{Fig_Scenario2}
\end{figure}



\noindent \textbf{Scenario d)}: As shown in Fig.~\ref{Fig_Scenario3}, six heterogeneous turtlebot3 robots, including three burgers and three waffles, maintain a desired (circle-like) formation while navigating through complex environments with obstacles. Robots learn to balance between local objective functions including collision avoidance and formation maintenance. 
The reconstructed trajectories in Fig.~\ref{Fig_s3_learn} show the robot's capability to leverage the shape of the obstacle to minimize the formation degradation. We test the generalization of the learned game in \ref{Fig_s3_valid} in a new environment with \textbf{eight} robots. The robots generate smooth trajectories and formation transitions. Furthermore, we observe two robots change their orders (T=0: different types of robots are separated v.s. T=70: two blue/red robots become adjacent) to reduce the formation degradation.
\begin{figure}[ht]
	\vspace{-2ex}
	\centering
	\begin{subfigure}[h]{0.46\textwidth}
		\centering
		\includegraphics[width=\textwidth]{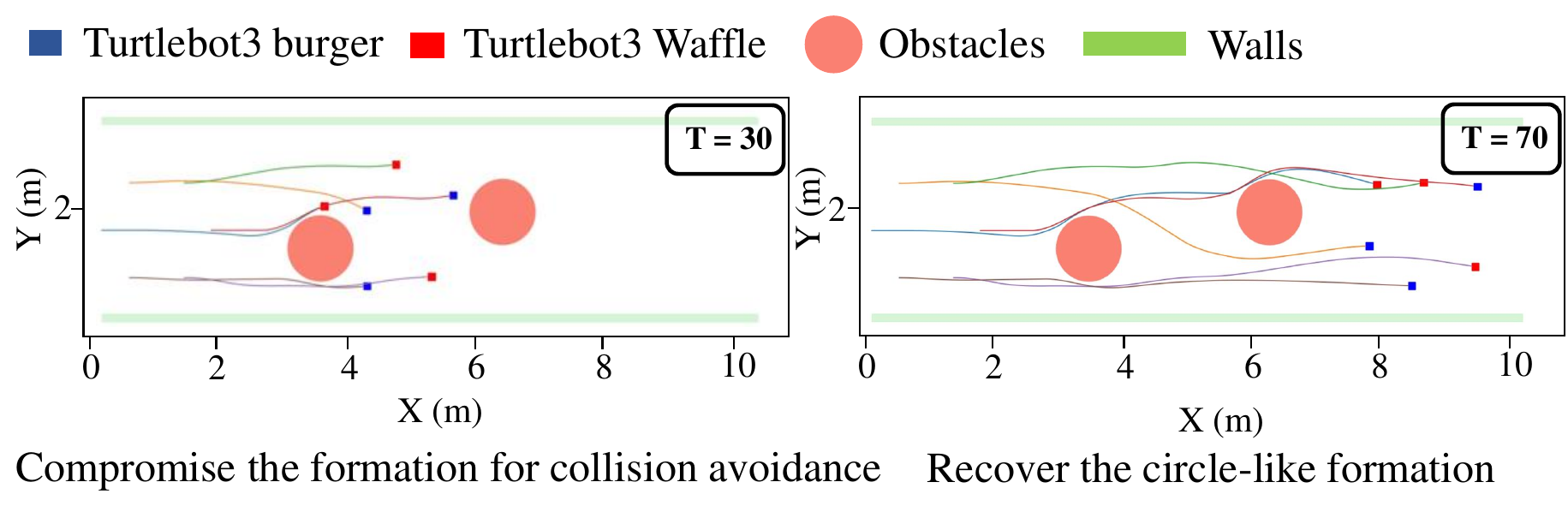}
		\caption{\small Inverse game solving: learning formation maintenance with six heterogeneous robots from demonstrations, figures show reproduced motion at convergence.}
		\label{Fig_s3_learn}
	\end{subfigure}
	\begin{subfigure}[h]{0.46\textwidth}
		\centering
		\includegraphics[width=\textwidth]{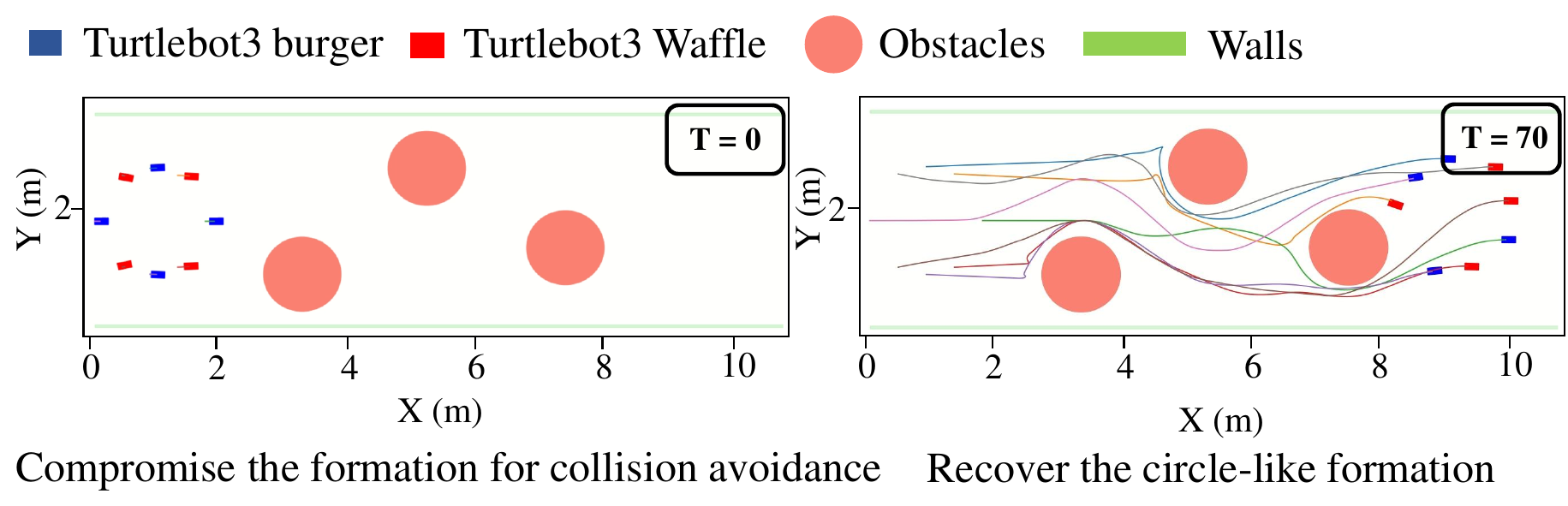}
		\caption{\small Generalization of the learned objective with eight heterogeneous robots in a new environment.}
		\label{Fig_s3_valid}
	\end{subfigure}
 \vspace{-.5ex}
	\caption{Learning formation control with heterogeneous robots.  \vspace{-1ex}}
    \label{Fig_Scenario3}
\end{figure}

\noindent\textbf{Comparison of Learning loss:}
We compare the convergence of the proposed method with the centralized IKKT method~\cite{EP-VNA-TM:17}. The GT-IRL~\cite{CK-XL:22} is not included since it is also based on the diff-KKT condition, leading to a similar convergence property as D3G in terms of learning loss.
The results of all scenarios are shown in Fig.~\ref{Fig_results_c}, where the y-axis represents the learning loss $\mathcal{L}_i$ for each robot, or the total learning loss for the whole system. 
In all scenarios, the total learning loss converges, and the parameter values will converge to those of the demonstrations.
Apart from the advantage in computation scalability demonstrated previously in Fig. \ref{Fig_time}, the proposed D3G, which is fully distributed, demonstrates a comparable, and in some cases, better convergence speed than the centralized IKKT.

\begin{figure}[ht]
   \vspace{-1ex}
   \centering
   \begin{subfigure}[h]{0.23\textwidth}
		\centering
		\includegraphics[width=\textwidth]{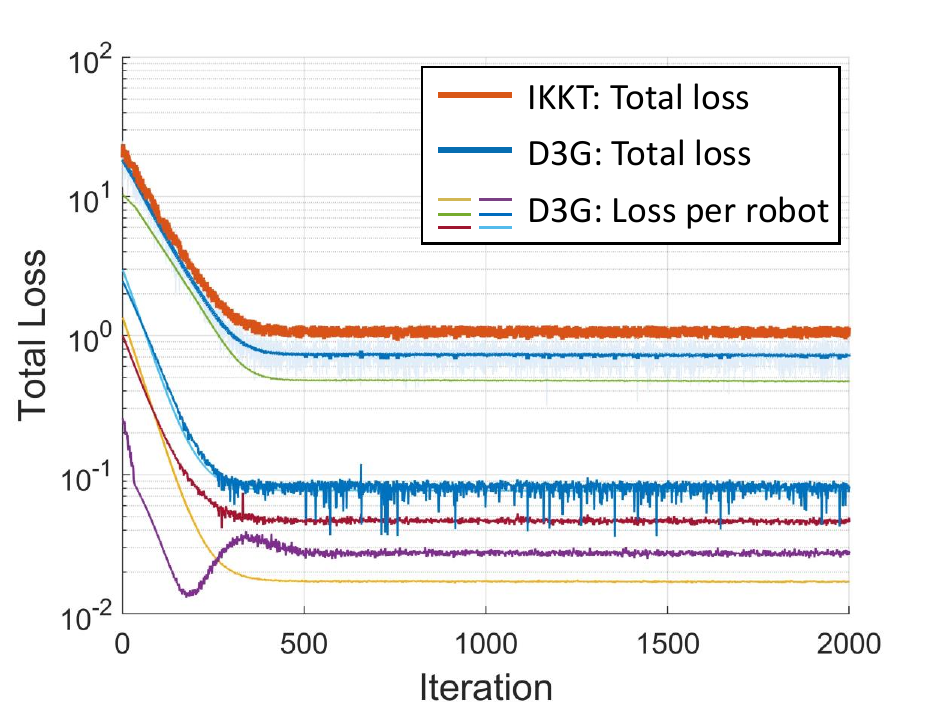}
		\caption{Learning loss of scenario \textbf{a}.}
		\label{Fig_Loss1}
   \end{subfigure}
   \begin{subfigure}[h]{0.23\textwidth}
		\centering
		\includegraphics[width=\textwidth]{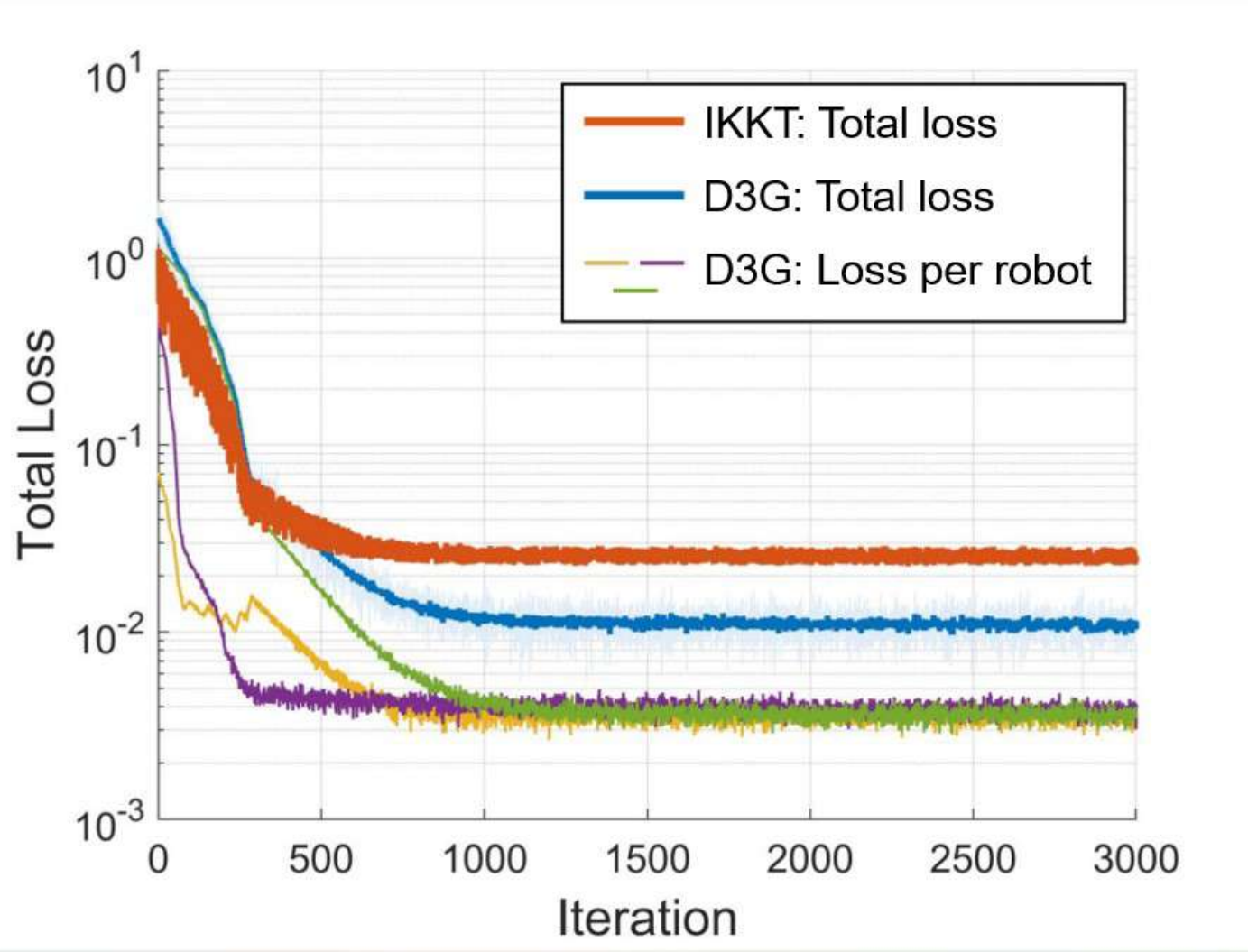}
		\caption{Learning loss of scenario \textbf{b}.}
		\label{Fig_Loss2}
   \end{subfigure}
   \begin{subfigure}[h]{0.23\textwidth}
		\centering
		\includegraphics[width=\textwidth]{Figs/Total_loss2.pdf}
		\caption{Learning loss of scenario \textbf{c}.}
		\label{Fig_Loss3}
   \end{subfigure}
   \begin{subfigure}[h]{0.23\textwidth}
		\centering
		\includegraphics[width=\textwidth]{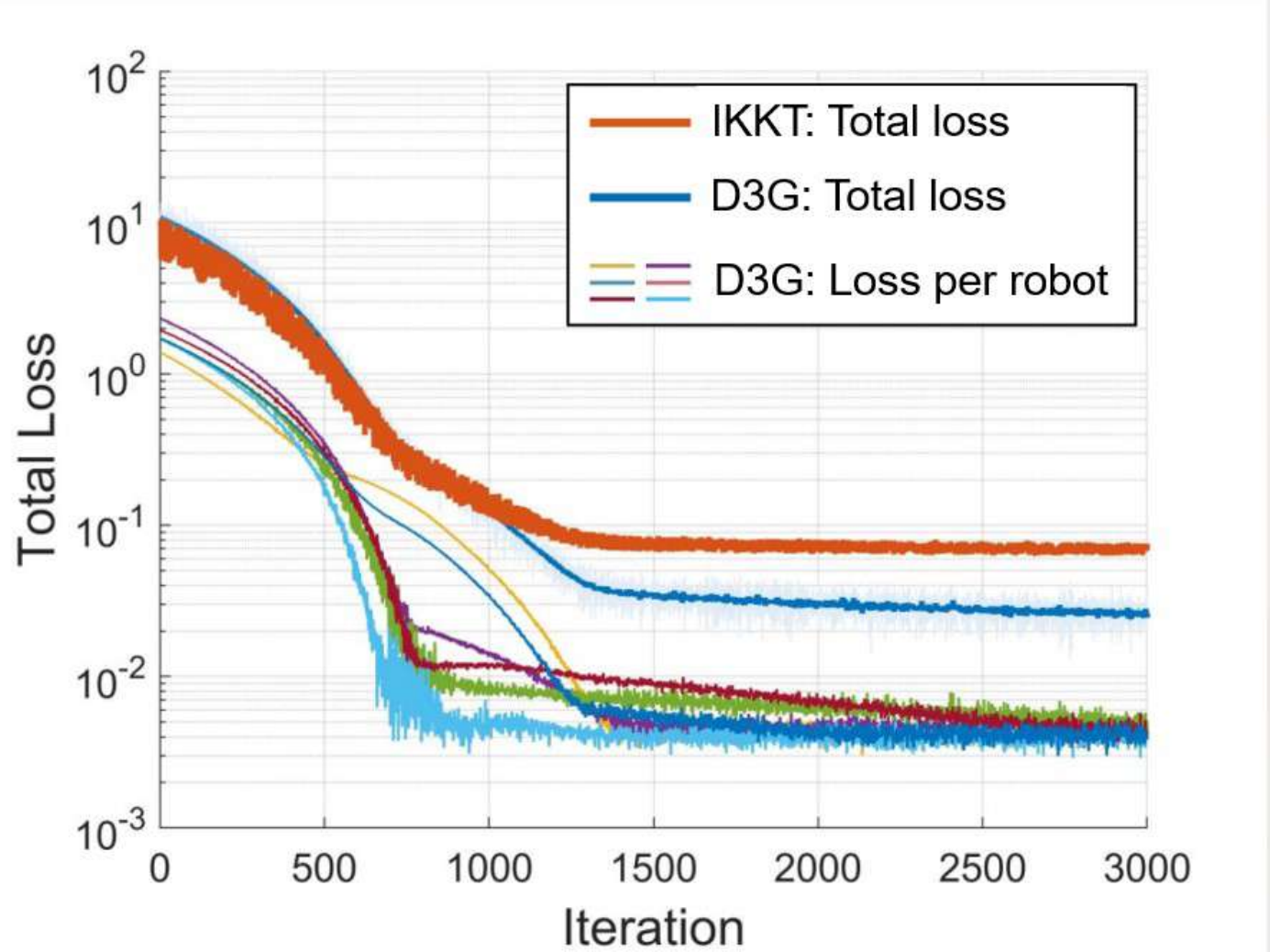}
		\caption{Learning loss of scenario \textbf{d}.}
		\label{Fig_Loss4}
   \end{subfigure}
   \caption{Total loss $\sum_{i=1}^m\mathcal{L}_i$ (with trajectory mismatch defined in \eqref{eq_lost}) of each scenario.}
   \label{Fig_results_c}

\end{figure}

\section{Conclusion and Future Work} \label{Sec_Conclu}
We have developed a new approach for inverse learning of a \emph{Distributed Differentiable Dynamic Game} (D3G), which aims to efficiently learn multi-robot coordination from demonstrations using robots' local information exchange. We represented multi-robot coordination as the Nash equilibrium of a parameterized dynamic game. The goal was to learn the parameters of the game so that it can reconstruct desired multi-robot coordination.
To this end, we developed a distributed inverse dynamic game algorithm with a solver for the diff-KKT condition that allows robots to cooperatively learn parameters for their dynamics and objective functions.   
We have shown the effectiveness of the proposed algorithm through analysis and high-fidelity Gazebo simulations and compared it with existing methods. 
For future works, we will implement the proposed framework into real-robot platforms such as ground and air vehicles for formation control, cooperative transportation, and navigation, where uncertainties, control disturbances, and communication delays will be considered. 
We also plan to further develop the inverse problem of D3G into a reinforcement learning paradigm. Instead of based on demonstrations, robots will learn coordination strategies through self-explorations.



\section*{Appendix}
\subsection{Forward Problem: Distributed Nash Seeking}

A key step for both Algorithms \ref{Algorithm_DIDG} and \ref{Algorithm_DPMPsolver} is to compute the Nash equilibrium of the dynamic game \eqref{eq_OCP} with current parameter $\Theta^k$.
Distributed Nash equilibrium-seeking algorithms for general-sum games have been established in existing literature~\cite{YM-HG:17, tatarenko2020geometric,ZY-HB-MZ-RW:21}. 
Here, we directly employ the result in~\cite{tatarenko2020geometric} for distributed iterative Nash seeking. Note that~\cite{tatarenko2020geometric} is only applicable for solving unconstrained Nash equilibrium. To address this, we follow the principle in~\cite{LSM:06} which leverages the KKT condition in \eqref{eq_kkt} to convert the constrained problem into an unconstraint version. The algorithm is summarized in Algorithm \ref{Algorithm_GAME_shooting}.

\begin{algorithm2e}
	\setstretch{1.0}
	\caption{Distributed Nash-Equilibrium seeking for Dynamic Games, the local update for robot $i$.}
	\label{Algorithm_GAME_shooting}
	\SetAlgoLined
	\textbf{Input}  $\mathbf{P}_i(\theta_i)$, $x_i^0$.\\
	\textit{Initialize} a random guess for $\mathbf{u}_{i}^{\tau=0}$\\
	
 \While{$\max(|\mathbf{u}_{i}^{\tau+1}-\mathbf{u}_{i}^{\tau}|)\ge\epsilon_{\mathbf{u}}$ }{
 {Represent} $\mathbf{x}_{i}^\tau$ as a function of $\mathbf{u}_{i}^\tau$ using equation \eqref{eq_kkt_c} and the initial condition $x_i^{t=0}$.\\
 {Represent} $\bm{\lambda}_{i}$ as a function of $\mathbf{x}_{i}^\tau$ and $\mathbf{u}_{i}^\tau$ using equation \eqref{eq_kkt_a}, which can then be further represented as a function of only $\mathbf{u}_{i}^\tau$. \\
 Eliminate variables $\mathbf{x}_{i}$ and $\bm{\lambda}_{i}$ in \eqref{eq_def_hi} by their representation of $\mathbf{u}_{i}^\tau$, and obtain a reformulated $\bar{\textbf{H}}_i$ as a function of $\mathbf{u}_{i}^\tau$. \label{step_eliminate}\\
 Use $\bar{\textbf{H}}_i$ and \eqref{eq_kkt_b} to compute $\nicefrac{d\bar{\textbf{H}}_i}{d\mathbf{u}_{i}^\tau}$.\\
 \label{step_s_update}
 State update:
 $\displaystyle\mathbf{u}_{i}^{\tau+1} = \mathbf{u}_{i}^{\tau}-\kappa\frac{d\bar{\textbf{H}}_i}{d\mathbf{u}_{i}^\tau}$
 }
	${\mathbf{u}^{\star}_{i}}=\mathbf{u}_{i}^{\tau}$;~
	${\mathbf{x}^{\star}_{i}}=\mathbf{x}_{i}^{\tau}$ \\
	\textbf{Output} $\bm{\xi}_i^{\star}=\{{\mathbf{u}^{\star}_{i}},{\mathbf{x}^{\star}_{i}}\}$.
\end{algorithm2e}
Note that for step \eqref{step_eliminate}, we do not eliminate variable $\mathbf{x}_{\mathcal{N}_i}$ because it does not depend on $\mathbf{u}_{i}$. In addition, comparing the state update in step \eqref{step_s_update} with that in~\cite{tatarenko2020geometric}, the agent's objective functions are only coupled among neighboring agents, thus, the consensus step that appeared in~\cite{tatarenko2020geometric} can be omitted. Finally, we note that the geometric convergence of~\cite{tatarenko2020geometric} is ensured with a sufficiently small step-size $\kappa$ only if $\bar{\mathbf{H}}_i$ is strictly convex in $\mathbf{u}_{i}$. This can be ensured by Assumption \ref{Ass_func} if further assuming that robots have affine dynamics functions. For general non-linear functions, there has been no \textit{theoretical} convergence guarantee. However, in all of our experiments, the convergence of Algorithm \ref{Algorithm_GAME_shooting} is observed.

\subsection{Proof of Lemma \ref{LM_alg3}}

	The establishment of Algorithm \ref{Algorithm_DPMPsolver} is based on one of our previous works for solving coupled linear constraints using distributed network flows~\cite{XSB19TAC,XS18ACC}. The following update 	
	\begin{subequations}\label{eq_gradientsolver}
	\begin{align}\label{eq_dist_DPMPsol}
		&\bm{Y}_i^{\tau+1}=\bm{Y}_i^\tau-\delta\bm{\Psi}_i^{\top}\bm{v}_i^\tau\\
		&{\bm{Z}}_i^{\tau+1}={\bm{Z}}_i^\tau+\delta\bm{v}_i^\tau	\\
  &\bm{v}_i^\tau=\bm{\Psi}_i\bm{Y}_i^\tau-\widehat{\bm{C}}_i-\sum_{\ell\in\mathcal{N}_i}(\bm{Z}_{i}^\tau-\bm{Z}_\ell^\tau)
	\end{align}
	\end{subequations}
	is a first order discretization of the algorithm in~\cite{XS18ACC}. With a proper choice of $\delta$, the convergence of update \eqref{eq_gradientsolver} is exponential and has been theoretically certified. Building on this result, here, we only need to verify that the equilibrium of \eqref{eq_gradientsolver} solves problem \eqref{eq_GLE}.
Specifically, the equilibrium of \eqref{eq_gradientsolver} implies $\bm{v}_i^{\star}=\bm{\Psi}_i\bm{Y}_i^{\star}-\widehat{\bm{C}}_i-\sum_{\ell\in\mathcal{N}_i}(\bm{Z}_{i}^{\star}-\bm{Z}_\ell^{\star})=0$ for all $i\in\{1,\cdots,m\}$. It follows that
\begin{align}\label{eq_sumv}
	\sum_{i=1}^{m} \bm{v}_i^{\star}=\!\sum_{i=1}^{m}\left(\bm{\Psi}_i\bm{Y}_i^{\star}+\widehat{\bm{C}}_i\right)-\!\sum_{i=1}^{m}\left(\sum_{j\in\mathcal{N}_i}(\bm{Z}_{i}^{\star}-\bm{Z}_j^{\star})\right)=0
\end{align}
Since the network is undirected, one has
\begin{align*} \sum_{i=1}^{m}\sum_{j\in\mathcal{N}_i}\bm{Z}_j^{\star}=\sum_{j=1}^{m}\sum_{i\in\mathcal{N}_j}\bm{Z}_j^{\star}=\sum_{j=1}^{m}|\mathcal{N}_j|\bm{Z}_{j}^{\star}\\
=
\sum_{i=1}^{m}|\mathcal{N}_i|\bm{Z}_{i}^{\star}=\sum_{i=1}^{m}\sum_{j\in\mathcal{N}_i}\bm{Z}_{i}^{\star}
\end{align*}
This and equations \eqref{eq_sumv} yield \eqref{eq_GLE}. \hfill \qed

\bibliographystyle{IEEEtran}

\bibliography{bibs/Ref_main}

\end{document}